\newcommand{\weitz}{\textbf{WEITZ}}
\newcommand{\prob}{\textbf{Pr}}
\newcommand{\ex}{\mathbb{E}}
\newtheorem{definition}{Definition}
\newcommand{\opt}{\text{OPT}}
\newtheorem{claim}{Claim}
\newtheorem*{lemmano}{Lemma}
\newtheorem{corollary}{Corollary}
\title{Contextual Learning for Stochastic Optimization}
\author{Anna Heuser\footnote{Institute of Computer Science and Lamarr Institute for Machine
Learning and Artificial
Intelligence, University of Bonn, Germany. Email:  aheuser1@uni-bonn.de} \and Thomas Kesselheim\footnote{Institute of Computer Science and Lamarr Institute for Machine
Learning and Artificial
Intelligence, University of Bonn, Germany. Email:  thomas.kesselheim@uni-bonn.de}}
\newcommand{\rwd}{V^*}
\newcommand{\lwd}{V'}
\newcommand{\contextdist}{X}
\newtheorem{theorem}{Theorem}
\newtheorem{lemma}[theorem]{Lemma}
\begin{document}
\maketitle
\begin{abstract}
   Motivated by stochastic optimization, we introduce the problem of learning from samples of contextual value distributions. A contextual value distribution can be understood as a family of real-valued distributions, where each sample consists of a context $x$ and a random variable drawn from the corresponding real-valued distribution $D_x$.  By minimizing a convex surrogate loss, we learn an empirical distribution $D'_x$ for each context, ensuring a small Lévy distance to $D_x$. We apply this result to obtain the sample complexity bounds for the learning of an $\epsilon$-optimal policy for stochastic optimization problems defined on an unknown contextual value distribution. The sample complexity is shown to be polynomial for the general case of strongly monotone and stable optimization problems, including Single-item Revenue Maximization, Pandora's Box and Optimal Stopping. 
 
\end{abstract}

\setcounter{tocdepth}{1} 

\section{Introduction}

In many real-world scenarios, decisions must be made under uncertainty, for example when optimizing pricing strategies for revenue maximization, minimizing production costs, or managing financial risk. Stochastic optimization provides a powerful framework for making optimal decisions in such uncertain environments.

A standard assumption throughout the literature is that the underlying probability distributions are known. A common explanation is that one can use past observations to gain this knowledge. In a recent line of research, this has been captured more explicitly. A typical example is the samples framework: Instead of assuming full knowledge of the probability distributions, we only see a bounded number of samples of each of them.

A major weakness of this framework is that the samples have to come from the same distributions. But not all days are the same: The distribution (capturing, for example, valuations or demand) will usually depend on the season, the day of the week, and alike. The sample framework neglects any such contextual information.

In this paper, we address this weakness by introducing a contextual model of probability distributions and ask the question:
\begin{quote}
    How can we use samples obtained in different contexts to (approximately) solve stochastic optimization problems?
\end{quote}

One should mention that indeed every regression problem can be understood as such a contextual learning problem. However, using regression, we only learn expectations of underlying distributions, which is not enough for stochastic optimization problems.\\
Besides regression, contextual models have been introduced in various fields before. They are best known in the field of contextual multi-armed bandits, but recently there have been a few approaches to contextual Pandora's Box \citep{contextualpandora}, contextual Dynamic Pricing \citep{tullii2024improved} and contextual Revenue Maximization \citep{devanur2016sample}. However the definitions of contextual distributions and the used techniques are very specific for the individual problems in all these examples. Our approach is the first to give generalized results for contextual Stochastic Optimization Problems. 

Our main finding is that (in benign settings) it suffices to minimize a (convex) surrogate loss function on the samples to learn a distribution that is close to the true one in Lévy distance for almost every context. As a consequence, a polynomial number of samples from different contexts suffices for many stochastic optimization problems.

\subsection{A Motivating Problem: Single-Buyer Single-Item Revenue Maximization}

As a motivating example, consider the problem of maximizing revenue when selling a single item to a single buyer. That is, we would need to find a price $p$ such that $p \cdot \Pr[y \geq p]$ is maximized, where $y$ is the buyer's value for the item. In our contextual model, we assume that the buyer's value $y$ is determined as $y = f(v,x)$, where $v \sim \rwd$ is a random weight vector, $x \sim \contextdist$ is a random context vector, and $f$ is a known function. The function $f$ is generally any convex and Lipschitz function. An example to keep in mind is a linear function $f(v, x) = \langle v, x \rangle$. Then a context vector $x \in \{0, 1\}^d$ would activate or deactivate certain additive components of the distribution.

When setting the price, the function $f$ and the context $x$ are known to us. Also knowing the distribution $\rwd$, finding such a price would be straightforward. However, we have only sample information. That is, we get to see $(x_1,y_1), \ldots, (x_m,y_m)$, where $y_j = f(v_j,x_j)$, $x_j \stackrel{\mathrm{i.i.d.}}{\sim} \contextdist$, and $v_j \stackrel{\mathrm{i.i.d.}}{\sim} \rwd$. Importantly, the underlying weight vectors $v_1, \ldots, v_m$ are not revealed to us. The key question is then to arrive at a good decision for a given context $x$, although most likely it is different from all context $x_1, \ldots, x_m$ we have seen samples for because there can be exponentially or even infinitely many different such vectors.

Moving to multiple buyers (and more complex optimization problems), we have one weight distribution $\rwd_i$ for every buyer and the respective draws are independent. The context is (without loss of generality) the same for all buyers. Our general, non-linear model allows us to capture interesting scenarios. As a simple example, consider $x,v \in \{0, 1\}^2$ and set $f(v, x) = (1 - x_1 v_1) v_2$. Then, we have $f(v, x) = v_2$ if $x_1 = 0$ or $v_1 = 0$ and $f(v, x) = 0$ otherwise. This way, we can express that certain buyers are not present at all depending on the context.

\subsection{Main Results and Techniques}
We give a general technique to learn contextual distributions based on samples. We show that polynomially many samples suffice to get within an additive $\epsilon$ error for Single-item Revenue Maximization, Optimal Stopping, and Pandora's Box.

In more detail, inspired by the squared loss for linear regression we introduce the capped squared loss of distribution $\lwd$ with respect to distribution $\rwd$ defined as  \[L^x\left (\lwd\right )=\ex_{v^*\sim \rwd}\left [\sum_{c\in C_\epsilon}\left (\ex_{v\sim \lwd}[\max\{c,f(v,x)\}]-\max\{c,
f(v^*,x)\}\right )^2\right ],\] where $C_\epsilon$ is a suitable discretization of real numbers. It plays a central role in our analysis: We show that on the one hand we can find a distribution efficiently that approximately minimizes the loss based on samples. On the other hand, we prove that any distribution of approximately minimal loss needs to be close to the true one in Lévy distance.

To show learnability, we use that the loss function is both convex and Lipschitz. It allows us to draw on established results from convex learning theory. In particular, we show that minimizing the regularized empirical loss leads to a distribution that also has a small true loss. Importantly, this minimization problem remains computationally tractable because we can restrict our attention to distributions that are uniform over a polynomial number of vectors, requiring us to only minimize a convex function over polynomially many dimensions.

Our main contribution is then to establish that a small loss translates directly to a small Lévy distance to the true distribution. For that we estimate the capped expectations of a distribution with small loss. We then show a connection between capped expectation and Lévy-distance. As a consequence, we show that with $O\left (\frac{d}{\epsilon^{16}\delta^2}\right )$ samples we obtain a distribution $\lwd$ fulfilling $d_L(\rwd,\lwd)\leq \epsilon$ with probability at least $1 - \delta$.

This result has important implications on contextual learnability of stochastic optimization problems. We observe that, for many such problems of interest, the reward function of the optimal policy remains largely unchanged when the underlying distribution is perturbed within a small Lévy distance. This robustness property, combined with the fact that these reward functions exhibit strong monotonicity in the sense of \citet{guo2021generalizing}, allows us to derive sample complexity guarantees for stochastic optimization problems. In particular we show that for the problems of contextual Single-item Revenue Maximization, contextual Optimal Stopping and contextual Pandora's Box $O\left (\frac{nd}{\epsilon^{16}\delta^2}\right)$ samples are sufficient to approximate the optimal policy with an error of at most $n\epsilon$ with a probability of at least $1-n\delta$. Here $n$ describes the number of buyers or boxes in the mentioned optimization problems and $d$ refers to the number of dimensions in context vectors. 

Due to its generality, deriving bounds via the Lévy distance is quite lossy. For Optimal Stopping and Pandora's Box, we therefore skip this intermediate step. Instead we rely on the fact that to find an optimal policy for such a problem we do not need to know the full distribution but only the capped expectations. This way, we obtain sample complexities of $O\left (\frac{nd}{\epsilon^{8}\delta^2}\right )$.

\subsection{Related Work and Additional Background}
\textbf{Contextual distributions} have been intensely studied in the field of contextual bandits, a special case of multi-armed bandits, in which the reward is generated via contextual distributions. The only characteristic of the underlying weight distributions, that needs to be approximated in these problems, is the expected value. Our techniques on contextual learning result in a more specific knowledge about the underlying weight distributions. Thus we receive a good approximation of the expected value, but also on other characteristics such as the cumulative distribution function. 
    
Earlier works on contextual multi-armed bandits, such as \citep{pmlr-v32-agarwalb14}, \citep{opt_contextual_bandits} and \citep{NIPS2007_4b04a686}, use classification oracles. More recently the focus has mainly been on regression oracles \citep{simchi2022bypassing, foster2018practical}. The list continues to different variations such as contextual bandits with packing and covering constraints \citep{slivkins2023contextual} or linear contextual bandits \citep{pmlr-v15-chu11a}.

Closer to the problems considered in this paper is contextual dynamic pricing. Here the general assumption is that the value of a buyer is generated by a fixed function on the context. \citet{features} for example study the problem of contextual dynamic pricing with adversarially chosen contexts and a fixed, deterministic function on the context. More recent results assume an added zero-mean noise on the buyers value, independent of the context. 
The zero-mean noise is either Lipschitz \citep{tullii2024improved, chen2021nonparametric,xu2022towards}, its second derivative is bounded \citep{luo2024distribution} or it is log-concave \citep{javanmard2019dynamic}. Our definition of a contextual distribution is stronger and generalizes the common definition of contextual dynamic pricing. In particular our model allows the noise, or the variance of the real-valued vector distributions, to be correlated with the context.

\citet{devanur2016sample} study Revenue Maximization with signals. A signal is just a one-dimensional context and they define a distribution on signals as well as a reward distribution for each signal. The only assumption made is, that the reward distribution of a higher signal stochastically dominates that of a smaller signal. \\
More recently \citet{contextualpandora} give some results on contextual Pandora's Box. In our definition of contextual Pandora's Box we assume that the reward of a box is generated using a function on a weight vector, drawn from an unknown distribution. \citet{contextualpandora} assume, that the fair-cap of a box is generated by a function on a fixed weight vector and the context. This assumption is less realistic and reduces the problem of contextual Pandora's Box to learning one expected value for each box.\\
In the field of \textbf{learning optimal policies for stochastic optimization problems} \citet{guo2021generalizing} give a general sample complexity bound for monotone optimization problems. When looking at problem-specific bounds \citet{correa2022prophet} study the learning of i.i.d. prophet inequalities, \citet{jin2024sample} on general prophet inequalities. The problem of learning Pandora's box has mainly been studied in the more challenging bandit setting \citep{pmlr-v162-gergatsouli22a, gatmiry2024bandit}. Examples of Revenue Maximization with samples are given by \citet{cole2014sample} and \citet{roughgarden2015ironing}, which concentrate on learning a good empirical virtual value via samples. A recent approach on Stochastic Optimization via Semi-Bandit feedback is given by \citet{DBLP:conf/focs/AgarwalGN24}.

\section{Problem Statement}
Let $\rwd$ and $\contextdist$ be distributions of $d$-dimensional vectors with a support in $[0,1]^d$. We call $\rwd$ the weight distribution and $\contextdist$ the context distribution. Additionally let $f:[0,1]^d\times [0,1]^d \rightarrow [0,c_{\max}]$ be the reward function, always known to the learner. The reward of a weight vector $v$ and a context $x$ is then defined by $f(v,x)$. We will call the distribution of rewards the \textbf{contextual value distribution} and denote it by $(\rwd, \contextdist,f)$. A sample of a contextual value distribution is a tuple $(x,f(v,x))$, where $x\sim \contextdist$ and $v\sim \rwd$ independently. Note that the weight vector $v^*$ is not visible in a sample. The only information gained about $v$ is the reward $f(v,x)$.\\
The standard assumption would be $f(v,x)=\langle v,x \rangle$, as is used for example in linear contextual bandits \citep{pmlr-v15-chu11a}. We generalize the linear case by assuming that the reward function is convex and Lipschitz in both variables. That is, we have \[|f(v,x)-f(v',x)|\leq \xi \|v-v'\|_2\] for all $v,v'\in [0,1]^d$ and \[|f(v,x)-f(v,x')|\leq \sqrt{d}\|x-x'\|_2\] for all $x,x'\in [0,1]^d.$
Intuitively this translates to a separate real-value distribution for each context.  When drawing a sample from the contextual value distribution, we draw a context and then a value from the real-value distribution given by the respective context. The weight distribution $\rwd$ and the reward function $f$ define the common structure of the real-value distributions. \\
A contextual stochastic optimization problem is an optimization problem defined on $n$ contextual value distributions as follows: 
\begin{enumerate}[label=\arabic*.]
    \item The weight distributions $\rwd_1,\dots \rwd_n$, one context distribution $\contextdist$ and a reward function $f$ are chosen. 
    \item A context $x$ is drawn from $\contextdist$ and revealed to the player.
     \item The player decides on a policy $\pi_x$.
     \item The weight vectors $v^*_1, \dots , v^*_n$ are drawn from the weight distributions independently.
     \item The player obtains a reward of $\pi_x(f(v^*_1,x), \dots , f(v^*_n,x))$.   
\end{enumerate}
We will concentrate on the setting of unknown weight and context distributions, but assume that we have access to $m$ samples $(x_1, f(v^*_1,x_1)), \dots (x_m, f(v^*_m, x_m))$ from each of the contextual value distributions. 

\textbf{What is the number of samples needed to be able to approximate the optimal policy?}
More precisely let $\pi^*_x$ denote the optimal policy given context $x$ and $\pi(r_1, \dots , r_n)$ denotes the reward of a policy $\pi$ on the realization of values $r_1, \dots , r_n$. Then what is the number of samples needed, such that, when given a context $x\sim \contextdist$, we can decide on a policy $\pi_x$, such that 
\[\ex\left [\pi_x(f(v_1,x), \dots f(v_n,x))\right ]\geq \ex\left [\pi^*_x(f(v_1,x), \dots f(v_n,x))\right ]-\epsilon\]
with a probability of at least $1-\delta$? \\
Here the expectation is taken over the random draws of weight vectors, not the random draw of the context, as the context is fixed and known, when deciding for a policy.

\section{Defining a Surrogate Loss Function}
Our approach to learn a distribution $\lwd$ is to define a loss function on the set of all distributions, which quantifies how ``close'' $\lwd$ is to $\rwd$. This function needs to be carefully chosen to ensure that it can be estimated and minimized only based on samples. The difficulty is that the number of different contexts can be huge and we cannot assume to observe the same context multiple times.

In order to define the loss function, first suppose that given a context $x$ we had to predict the capped expectation $\ex_{v\sim \rwd}\left [\max\{c,f( v,x)\}\right ]$ for a fixed $c$. Then it would be natural to perform a least-squares regression, minimizing 
$\frac{1}{m} \sum_{i = 1}^m \left(\ex_{v'\sim \lwd}\left [ \max\{c,f(v',x_i)\}\right ] - \max\{c, y_i\} \right)^2$ or a regularized version thereof. Then, indeed, the minimizer gives us an unbiased estimated of the expectation.

Our true problem requires us to be able to estimate $\ex_{v\sim \rwd}\left [\max\{c,f( v,x)\}\right ]$ for all values of $c$ simultaneously for a given context $x$. Therefore, we add up the respective quadratic error terms for all $c$ from a sufficiently dense set. In our case $C_\epsilon =\{i \cdot \epsilon \mid i\in \{0, \dots, c_{\max}/\epsilon -1\}$ for a suitable chosen $\epsilon$ will give the desired results. Based on this, we define \textbf{capped squared loss} of a distribution $\lwd$ on a sample $(x, y)$ as 
\[\ell\left (\lwd,(x,y)\right )=\sum_{c\in C_\epsilon}\left (\ex_{v\sim \lwd}[\max\{c,f(v,x)\}]-\max\{c,y\}\right )^2.\]

Consequently, on a set of samples $S=\{(x_1, y_1), \ldots, (x_m, y_m)\}$, the empirical loss is defined as the average of the individual losses of the samples
\[
L(\lwd, S) = \frac{1}{m} \sum_{i=1}^m \ell(\lwd, (x_i,y_i)) = \sum_{c \in C_\epsilon} \frac{1}{m} \sum_{i=1}^m \left (\ex_{v\sim \lwd}[\max\{c,f(v,x)\}]-\max\{c,y\}\right )^2.
\]
Note that this last step is only possible because in the loss definition we are taking the sum over all $c$, rather than, for example the maximum. Analogously we define the true loss with respect to distribution $\rwd$ as 
\[L(\lwd)=\ex_{x\sim X}\left [\ex_{v^*\sim \rwd}\left[\sum_{c\in C_\epsilon}\left (\ex_{v'\sim \lwd}\left [\max\{c,f(v',x)\}\right ]-\max\{c,f(v^*,x)\}\right )^2\right]\right ]\] and the true loss for a fixed context $x$ as 
\[L^x(\lwd)=\ex_{v^*\sim \rwd}\left[\sum_{c\in C_\epsilon}\left (\ex_{v'\sim \lwd}\left [\max\{c,f(v',x)\}\right ]-\max\{c,f(v^*,x)\}\right )^2\right].\] 
Note that in general the capped squared loss, as well as the true loss, takes on a positive value and, even for the real underlying distribution $\rwd$, it is not zero. Instead $L^x(\rwd)$ can be understood as the sum of variances of the random variables $\max\{c, f(v, x)\}$ for all $c\in C_{\epsilon}$.\\
However $\rwd$ minimizes the true loss and, although the quality of a distribution $\lwd$, given by $|L(\rwd)-L(\lwd)|$ is unknown, the quality improves with decreasing true loss. Additionally the capped squared loss is by design convex.

\section{Efficiently Learning a Distribution with Small Loss from Samples}
First of all, we will make use of a result from convex learning to learn a distribution with a small true loss using samples. In particular the learned distribution will be a uniform distribution with bounded support. However this restriction does not limit our results, as the true loss can still be arbitrarily close to the true loss of $\rwd$.

 \begin{theorem}\label{max_expectation:high-prob}
 Using $m\geq \frac{32d\xi^2c_{\max}^4}{\epsilon^4\delta^2}$ samples we can learn a weight distribution $\lwd$, such that with probability of at least $1-\delta$ we have 
        \[L^x(\lwd)\leq L^x(\rwd)+2\epsilon,\] if $x$ is a context drawn from $X$.
    \end{theorem}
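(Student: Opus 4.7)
The plan is to combine a standard stochastic convex optimisation (SCO) bound with a single application of Markov's inequality. First, I would restrict the hypothesis class to $\mathcal{H}_k$, the set of distributions that are uniform over $k$ vectors $u_1,\dots,u_k\in[0,1]^d$, identifying each $\lwd\in\mathcal{H}_k$ with a point in $[0,1]^{kd}$. To verify that this restriction is harmless, I would draw $k$ vectors i.i.d.\ from $\rwd$ and let $\tilde V$ denote their empirical distribution. Using the identity $\ex[(a-Y)^2-(b-Y)^2]=(a-b)^2$ valid whenever $\ex[Y]=b$, a short variance calculation yields $\ex_{\tilde V}[L(\tilde V)]-L(\rwd)\leq c_{\max}^3/(\epsilon k)$, so for $k=\Theta(c_{\max}^3/(\epsilon^2\delta))$ there exists some $\tilde V\in\mathcal{H}_k$ with $L(\tilde V)\leq L(\rwd)+\epsilon\delta$.

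Next I would bound the Lipschitz constant $G$ and the diameter $D$ of the SCO instance on $\mathcal{H}_k$. Because $f$ is $\xi$-Lipschitz in its first argument, the block gradient of $\ex_{v\sim \lwd}[\max\{c,f(v,x)\}]$ satisfies $\|\partial/\partial u_i\|_2\leq \xi/k$; multiplying by the chain-rule factor of at most $2c_{\max}$ from the outer square, taking an $\ell_2$ norm over the $k$ blocks, and then summing over the $c_{\max}/\epsilon$ values of $c\in C_\epsilon$ gives $G=O(c_{\max}^2\xi/(\epsilon\sqrt{k}))$. The parameter space $[0,1]^{kd}$ has diameter $D=\sqrt{kd}$, so crucially $DG=O(\sqrt{d}\,c_{\max}^2\xi/\epsilon)$ is \emph{independent} of $k$. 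A standard SCO bound for convex Lipschitz losses (regularised ERM or averaged SGD) then returns a random $\lwd\in\mathcal{H}_k$ with $\ex_S[L(\lwd)]\leq \min_{V\in\mathcal{H}_k}L(V)+O(DG/\sqrt{m})$; the budget $m\geq 32\,d\xi^2 c_{\max}^4/(\epsilon^4\delta^2)$ makes the SCO slack at most $\epsilon\delta$, and combining with the previous paragraph gives $\ex_S[L(\lwd)]\leq L(\rwd)+2\epsilon\delta$.

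Finally, the minimiser of $\ex_{v^*\sim \rwd}[(a-\max\{c,f(v^*,x)\})^2]$ over $a$ is $a=\ex_{v\sim \rwd}[\max\{c,f(v,x)\}]$, so $\rwd$ minimises $L^x$ for every context $x$ and hence $L^x(\lwd)-L^x(\rwd)\geq 0$ almost surely. Markov's inequality applied conditionally on the sample $S$ and then averaged over $S$ yields
\[
\Pr_{S,x}\bigl[L^x(\lwd)>L^x(\rwd)+2\epsilon\bigr]\;\leq\;\frac{\ex_S[L(\lwd)-L(\rwd)]}{2\epsilon}\;\leq\;\delta,
\]
which is the claimed high-probability statement. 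The step I expect to require the most care is the Lipschitz analysis: the cancellation between the $\sqrt{k}$ in $D$ and the $1/\sqrt{k}$ in $G$ is what lets the approximating size $k$ be taken as large as needed without inflating the sample complexity, and a careless bound that leaves $k$ inside $DG$ would miss the $d$ dependence stated in the theorem.
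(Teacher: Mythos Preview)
Your proposal is correct and follows the same three-step architecture as the paper: restrict to uniform distributions over $k$ vectors in $[0,1]^d$, apply the convex-learning bound of \citet{JMLR:shalev-shwartz} (noting, exactly as you emphasise, that the product $DG$ of diameter and Lipschitz constant is independent of $k$), and finish with Markov's inequality on the nonnegative random variable $L^x(\lwd)-L^x(\rwd)$. The one place you diverge is the approximation step: the paper's Lemma~\ref{gen_distr} uses Hoeffding's inequality together with a union bound over a discretised grid of contexts and caps to exhibit a finite-support $V$ with $L^x(V)\leq L^x(\rwd)+\epsilon$ \emph{uniformly in $x$}, whereas your variance calculation (via the identity $\ex[(a-Y)^2]-\ex[(b-Y)^2]=(a-b)^2$ for $b=\ex[Y]$) gives only the averaged bound $L(\tilde V)\leq L(\rwd)+\epsilon\delta$. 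Since only this averaged quantity is fed into the SCO step and then into Markov, your more elementary route suffices for the theorem as stated and sidesteps the context discretisation entirely; the paper's pointwise version buys a stronger intermediate lemma but is not needed for this particular conclusion.
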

In the proof we use known results from convex learning to bound the sample complexity for learning a distribution with bounded support and small true loss. We then show that any distribution can be approximated by a distribution with bounded support. 
\subsection{Restriction to Bounded Support}
We restrict our attention to distributions with bounded support, as they can efficiently be learned via convex learning. But in fact for any distribution $\rwd$ there exists a distribution with finite support that is close to $\rwd$ with respect to the true loss. Therefore the true loss of the learned distribution will also be close to the true loss of the real underlying weight distribution $\rwd$.
\begin{lemma}\label{gen_distr}
    For any distribution $\rwd$ over weight vectors $v\in [0,1]^d$ there exists a distribution $V$ with a support of at most $2\frac{c_{\max}^3}{\epsilon^2}\left (d\log\left (dc_{\max}\right ) + (d+1)\log \left (2/\epsilon\right )\right ) $ vectors, such that for any context $x\in [0,1]^d$ we have $L^x(V) \leq L^x(\rwd) + \epsilon$.
\end{lemma}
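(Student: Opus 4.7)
My approach is the probabilistic method: draw $N$ weight vectors $v_1, \ldots, v_N$ i.i.d.\ from $\rwd$, let $V$ be the uniform distribution on $\{v_1, \ldots, v_N\}$, and show that for $N$ matching the claimed support bound the event ``$L^x(V) \leq L^x(\rwd) + \epsilon$ for all $x \in [0,1]^d$'' has positive probability. The first simplification reduces the loss gap to squared bias terms. Writing $g_c(W, x) := \ex_{v \sim W}[\max\{c, f(v, x)\}]$ and using $g_c(\rwd, x) = \ex_{v^* \sim \rwd}[\max\{c, f(v^*, x)\}]$, the bias-variance identity $\ex[(Y - a)^2] - \ex[(Y - \ex[Y])^2] = (a - \ex[Y])^2$ applied to each summand of $L^x$ yields
\[
L^x(V) - L^x(\rwd) = \sum_{c \in C_\epsilon} \bigl(g_c(V, x) - g_c(\rwd, x)\bigr)^2.
\]
Since $|C_\epsilon| = c_{\max}/\epsilon$, it is enough to enforce $|g_c(V, x) - g_c(\rwd, x)| \leq \epsilon/\sqrt{c_{\max}}$ uniformly over $c \in C_\epsilon$ and $x \in [0,1]^d$.

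For each fixed $(c,x)$ the variables $\max\{c, f(v_i, x)\}$ take values in $[0, c_{\max}]$, so Hoeffding bounds $\prob[|g_c(V, x) - g_c(\rwd, x)| > \eta/2]$ by $2\exp(-N\eta^2/(2c_{\max}^2))$ with $\eta = \epsilon/\sqrt{c_{\max}}$. To upgrade from pointwise to uniform I use a covering net in context space. The assumption $|f(v,x) - f(v,x')| \leq \sqrt{d}\|x - x'\|_2$ composed with the $1$-Lipschitzness of $z \mapsto \max\{c, z\}$ makes $g_c(V, \cdot)$ and $g_c(\rwd, \cdot)$ both $\sqrt{d}$-Lipschitz in $x$; therefore an $\epsilon'$-net of $[0,1]^d$ in $\ell_2$ of size at most $(\sqrt{d}/\epsilon')^d$ with $\epsilon' = \epsilon/(4\sqrt{dc_{\max}})$ turns control of $\eta/2$ on the net into control of $\eta$ everywhere.

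Taking a union bound over the at most $(c_{\max}/\epsilon)\cdot(\sqrt{d}/\epsilon')^d$ many $(c,x)$-pairs and substituting $\eta = \epsilon/\sqrt{c_{\max}}$ into the Hoeffding tail, the failure probability drops below $1$ as soon as $N$ is of order $(c_{\max}^3/\epsilon^2)\bigl(d\log(dc_{\max}) + (d+1)\log(2/\epsilon)\bigr)$, matching the stated support bound up to the factor $2$; the probabilistic method then produces the required $V$. The delicate part of the argument is balancing the two slacks so they both sit at $\epsilon/\sqrt{c_{\max}}$: shrinking $\epsilon'$ to absorb the $\sqrt{d}$ Lipschitz constant forces the log-covering factor to scale as $d\log(dc_{\max}) + d\log(1/\epsilon)$, which is precisely what drives the final support size. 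The reduction to the bias-variance identity is the enabling step, since without it one would have to control the squared loss directly, which does not decompose cleanly across $c$.
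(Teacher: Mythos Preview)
Your proposal is correct and follows essentially the same route as the paper's proof: probabilistic method via i.i.d.\ samples from $\rwd$, Hoeffding for pointwise control of the capped expectations, a finite net in context space together with the $\sqrt{d}$-Lipschitz property of $f$ to pass to all $x$, and a union bound over $C_\epsilon \times \text{net}$. Your explicit invocation of the bias--variance identity $L^x(V)-L^x(\rwd)=\sum_{c\in C_\epsilon}(g_c(V,x)-g_c(\rwd,x))^2$ is in fact cleaner than the paper's corresponding step, which writes the same bound as a somewhat informal pointwise inequality inside the expectation.
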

The full proof is given in Section \ref{proof_gen_distr}. The idea is that the true loss can be bounded by $\frac{c_{\max}}{\epsilon}$ times the squared distance of the capped expectation. Hence we can focus on finding a distribution $V$, such that 
\[\left |\ex_{v\sim V}\left [\max\{c,f(v_i,x)\}\right ]-\ex_{v^*\sim \rwd}\left [\max\{c,f(v^*,x)\}\right ]\right|\] is small.
We will use the fact that an empirical value, given by a number of samples, is equal to the expected value of the uniform distribution on these samples. Therefore it suffices to show, that, given $n\geq2\frac{c_{\max}^3}{\epsilon^2}\left (d\log\left (dc_{\max}\right )+(d+1)\log \left (2/\epsilon\right )\right ) $  samples from $\rwd$, the probability that these samples approximate the capped expectation is positive for any $x$ and any $c$. For that we can use Hoeffding's inequality and a union bound over discretized sets of constants $c$ and contexts $x$. 
\subsection{Learning from Samples}

In convex learning the goal is to learn a hypothesis which minimizes a true loss function via samples. The existing results, given by \citet{JMLR:shalev-shwartz}, only require the loss function to be convex and Lipschitz, and the hypothesis space to be closed, convex and bounded. Let $\mathcal{V}^k$ denote the set of all uniform distributions with a support of exactly $k$ vectors in $[0,1]^d$. When restricting our problem to learning a distribution in $\mathcal{V}^k$, we have a closed, convex and bounded hypothesis space. In particular note that any $V\in \mathcal{V}^k$ can be expressed as a vector in $[0,1]^{kd}$ and therefore $\|V\|_2\leq \sqrt{kd}$. Additionally the following Lemma gives an upper bound on the Lipschitz constant for the capped squared loss. 

 \begin{lemma}\label{lipschitz-bound}
            The Lipschitz constant of the capped squared loss restricted to distributions in $\mathcal{V}^k$ can be bounded by $\frac{2c_{\max}^2}{\epsilon \sqrt{k}}\xi$. 
        \end{lemma}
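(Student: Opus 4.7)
The plan is to parametrize any $\lwd \in \mathcal{V}^k$ as a point $(v_1, \dots, v_k) \in [0,1]^{kd}$ and then bound the Lipschitz constant of the per-sample loss $\ell(\lwd,(x,y))$ with respect to this parametrization; the bound then lifts for free to the empirical and true losses since both are averages. Fix an arbitrary sample $(x,y)$ and an arbitrary $c\in C_\epsilon$, and define
\[
g_c(\lwd) \;=\; \ex_{v\sim\lwd}\bigl[\max\{c,f(v,x)\}\bigr] - \max\{c,y\} \;=\; \frac{1}{k}\sum_{i=1}^k \max\{c,f(v_i,x)\} - \max\{c,y\}.
\]
The first step is to show that $g_c$ is $\frac{\xi}{\sqrt{k}}$-Lipschitz. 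For any $\lwd=(v_1,\dots,v_k)$ and $\lwd'=(v_1',\dots,v_k')$, I would use the elementary estimate $|\max\{c,a\}-\max\{c,b\}|\leq |a-b|$ and the assumed $\xi$-Lipschitzness of $f$ in its first argument to get
\[
|g_c(\lwd)-g_c(\lwd')| \;\leq\; \frac{1}{k}\sum_{i=1}^k |f(v_i,x)-f(v_i',x)| \;\leq\; \frac{\xi}{k}\sum_{i=1}^k \|v_i-v_i'\|_2,
\]
and then invoke Cauchy--Schwarz on the sum to upgrade this to $\frac{\xi}{\sqrt{k}}\|\lwd-\lwd'\|_2$. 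The $\sqrt{k}$ factor is the important gain here: the hypothesis space has diameter $\sqrt{kd}$ in $\|\cdot\|_2$, so without this $\sqrt{k}$ in the denominator the eventual sample complexity bound from convex learning would blow up with $k$.

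The second step is to pass from $g_c$ to $g_c^2$. Since both $\ex_{v\sim\lwd}[\max\{c,f(v,x)\}]$ and $\max\{c,y\}$ lie in $[0,c_{\max}]$, we have $|g_c(\lwd)|\leq c_{\max}$. Using the identity $|a^2-b^2|=|a+b|\cdot|a-b|$ together with the Lipschitz bound from the first step gives
\[
|g_c(\lwd)^2 - g_c(\lwd')^2| \;\leq\; 2c_{\max}\cdot \frac{\xi}{\sqrt{k}}\|\lwd-\lwd'\|_2.
\]

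The final step is simply to sum over $c\in C_\epsilon$. Since $|C_\epsilon|=c_{\max}/\epsilon$, the triangle inequality yields
\[
|\ell(\lwd,(x,y))-\ell(\lwd',(x,y))| \;\leq\; \frac{c_{\max}}{\epsilon}\cdot \frac{2c_{\max}\xi}{\sqrt{k}}\|\lwd-\lwd'\|_2 \;=\; \frac{2c_{\max}^2 \xi}{\epsilon\sqrt{k}}\|\lwd-\lwd'\|_2,
\]
which is the claimed Lipschitz bound. There is no real obstacle here; the only step that requires any care is the Cauchy--Schwarz conversion in the first step, which is what produces the $\frac{1}{\sqrt{k}}$ scaling that makes this lemma useful downstream for the convex-learning sample complexity argument.
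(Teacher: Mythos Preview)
Your proof is correct and reaches the same constant, but the route differs from the paper's. The paper bounds the gradient: it writes out the partial derivative $\partial \ell / \partial v_{i,j}$, bounds each entry by $\frac{2}{k}\cdot\frac{c_{\max}^2}{\epsilon}\cdot\bigl|\partial f/\partial v_j\bigr|$, and then uses $\|\nabla_v f\|_2\leq \xi$ to conclude $\|\nabla\ell\|_2\leq \frac{2c_{\max}^2}{\epsilon\sqrt{k}}\xi$. Your argument is instead purely metric: you bound $|g_c(\lwd)-g_c(\lwd')|$ directly via $|\max\{c,a\}-\max\{c,b\}|\leq|a-b|$ and Cauchy--Schwarz, then use $|a^2-b^2|\leq 2c_{\max}|a-b|$, and finally sum over $|C_\epsilon|=c_{\max}/\epsilon$ terms. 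The advantage of your approach is that it sidesteps differentiability entirely---the paper only assumes $f$ is convex and $\xi$-Lipschitz, and neither $f$ nor $\max\{c,\cdot\}$ need be differentiable everywhere, so the gradient computation in the paper is implicitly appealing to a.e.\ differentiability or subgradients, whereas your argument needs none of that. Both produce the same $1/\sqrt{k}$ scaling, which, as you correctly note, is what makes the downstream sample complexity independent of $k$.
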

        Note that $\xi$ denotes the Lipschitz constant of the reward function, i.e. $|f(v,x)-f(v',x)|\leq \xi \|v-v'\|_2$ for all $v,v'\in [0,1]^d$.
        \begin{proof} 
        To show Lipschitz continuity it is sufficient to upper bound the norm of the gradient. Let $\rwd\in [0,1]^{kd}$ be a distribution and $v_{i,j}$ denotes the $j$-th entry of the $i$-th vector in this distribution. Now we consider the partial derivative
            \begin{align*}
				\pd{}{v_{i,j}}\ell(V,(x,y))&= \sum_{c\in C_\epsilon}\left (\frac{2}{k}\sum_{i=1}^{k}{\max\{c, f(v_i,x)\}}-y\right )\frac{1}{k}\pd{}{v_j}(f(v,x))\mathbbm{1}\{c\leq f(v_i,x)\} \\
                &\leq \sum_{c\in C_\epsilon}{\frac{2}{k} \pd{}{v_j}(f(v,x))\max_{v\in \rwd}\max\{c,f(v,x)\}}\leq \frac{2}{k}\pd{}{v_j}(f(v,x))c_{\max}^2/\epsilon
			\end{align*} 
            and as $f$ is $\xi$-Lipschitz it follows that $\|\nabla \ell\|_2\leq \frac{2c_{\max}^2}{\epsilon \sqrt{k}}\xi$.
        \end{proof}
So changing a distribution slightly results in a small change in the capped squared loss. In particular the Lipschitz constant decreases with increasing $k$. This matches the intuition, that a smaller probability mass on a particular vector in the support results in a smaller impact of that vector on the loss. 
\begin{lemma}\label{max_expectation}
For any $k\in \mathbb{N}$ and any $\alpha$ we can learn a weight distribution $\lwd$, such that 
\[\ex_S\left [L(\lwd)\right ]\leq \min _{V\in \mathcal{V}^k}L(V)+\alpha\] using a set of samples $S$ with $|S|\geq \frac{32d\xi^2c_{\max}^4}{\epsilon^2\alpha^2}$.
\end{lemma}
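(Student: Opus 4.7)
The plan is to reduce this statement to a standard stochastic convex optimization guarantee: since the capped squared loss is convex in $V$ and Lipschitz, Regularized Loss Minimization (or equivalently projected stochastic gradient descent with a tuned step size) on the sample $S$ will output a distribution whose expected excess loss decays at the textbook rate $\rho B/\sqrt{m}$. Viewing every $V\in\mathcal{V}^k$ as a point in $[0,1]^{kd}$ by stacking its $k$ support vectors, the hypothesis class is closed, convex, and contained in the Euclidean ball of radius $B:=\sqrt{kd}$. Convexity of $\ell$ has already been noted, and Lemma~\ref{lipschitz-bound} supplies a Lipschitz constant $\rho \leq \tfrac{2 c_{\max}^2 \xi}{\epsilon \sqrt{k}}$. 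So we are squarely inside the convex-Lipschitz-bounded learning framework.

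Concretely, I would invoke the RLM guarantee for such problems from \citet{JMLR:shalev-shwartz}. For an appropriately tuned regularization parameter (on the order of $\rho/(B\sqrt{m})$), the learning rule returns an $\lwd\in\mathcal{V}^k$ satisfying
\[
\ex_S\!\left[L(\lwd)\right] \leq \min_{V\in\mathcal{V}^k} L(V) + \sqrt{\frac{8\,\rho^2 B^2}{m}}.
\]
The central (and pleasantly clean) computation is that the $k$-dependence cancels in the product
\[
\rho^2 B^2 \leq \frac{4 c_{\max}^4 \xi^2}{\epsilon^2 k}\cdot kd = \frac{4 d c_{\max}^4 \xi^2}{\epsilon^2}.
\]
Demanding that the excess-loss term be at most $\alpha$ translates to $m \geq \tfrac{8 \rho^2 B^2}{\alpha^2} = \tfrac{32 d \xi^2 c_{\max}^4}{\epsilon^2 \alpha^2}$, exactly the bound in the statement. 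Conceptually, the cancellation reflects that enlarging $k$ inflates the ambient dimension but shrinks each point's contribution to the gradient at the same rate, so the learning problem has no intrinsic dependence on $k$.

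The main obstacle is bookkeeping rather than substance: one has to verify that the Euclidean norm entering Lemma~\ref{lipschitz-bound} is the same one that bounds the diameter of $\mathcal{V}^k$, and to tune the regularization constant (or the SGD step size and iterate-averaging scheme) so that the sharp $\rho B/\sqrt{m}$ rate is realized with the constant $8$ needed to land on $32$ in the final inequality. A minor technicality is that unconstrained RLM a priori minimizes over all of $\mathbb{R}^{kd}$; since $\mathcal{V}^k$ is simply the box $[0,1]^{kd}$, one either adds an explicit projection onto this box at each SGD step or observes that constraining to the box cannot hurt the excess-risk bound. Beyond this plumbing, the content of the lemma is entirely the cancellation between $B$ and $\rho$ engineered by the definition of the capped squared loss.
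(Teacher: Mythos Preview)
Your proposal is correct and matches the paper's approach exactly: both invoke the convex--Lipschitz--bounded learning guarantee of \citet{JMLR:shalev-shwartz} with $B=\sqrt{kd}$ and $\rho=\tfrac{2c_{\max}^2\xi}{\epsilon\sqrt{k}}$ from Lemma~\ref{lipschitz-bound}, observe that the $k$-dependence cancels in $\rho^2 B^2$, and read off the sample bound $m\geq 8\rho^2 B^2/\alpha^2=32d\xi^2c_{\max}^4/(\epsilon^2\alpha^2)$. Your additional remarks on projecting onto the box $[0,1]^{kd}$ are sound bookkeeping that the paper simply leaves implicit.
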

We give a short introduction on convex learning and the full theorem stated by \citet{JMLR:shalev-shwartz} in Section \ref{convex_learning}. The learned distribution $\lwd$ is obtained by finding the distribution, which minimizes the loss on the sample set $S$ and an additive regularization term. \\
The sample complexity depends only on $\alpha$, the Lipschitz constant of the loss and an upper bound on $\|V\|_2$. In particular note that it does not depend on the size of the distributions $k$. Although both the Lipschitz constant and the norm of a distribution in $\mathcal{V}^k$ depend on $k$, the dependencies cancel out in the sample complexity.  However using these techniques we can only learn a distribution with bounded support. 

Using our knowledge about distributions with bounded support and Markov's inequality we can then get a high probability result for the true loss for a fixed context to the true loss of $\rwd$.

\begin{proof}[Proof of Theorem \ref{max_expectation:high-prob}]
Let $S$ be a set of samples of size $\frac{32d\xi^2c_{\max}^4}{\epsilon^4\delta^2}$. Then by using Lemma \ref{max_expectation} we know that we can learn a distribution $V$ with 
    \[\ex_S\left [L(\lwd)\right ]\leq \min _{V\in \mathcal{V}^k}L(V)+\epsilon\delta\] for any $k$. From Markov's inequality it then follows, that 
    \[L^x(\lwd)\leq \min _{V\in \mathcal{V}^k}L^x(V)+\epsilon\] for some $x\sim X$ with probability of at least $1-\delta$. If we choose $k$ to be at least $2\frac{c_{\max}^3}{\epsilon^2}\log\left (2\frac{(2dc_{\max})^d}{\epsilon ^{d+1}}\right ) $, then it immediately follows that 
    \[L^x(\lwd)\leq L^x(\rwd)+2\epsilon.\] 
\end{proof}
Note again that the true loss of $\rwd$ can be arbitrarily high and therefore the value of the true loss itself does not give us any information on the quality of our learned distribution. However $\rwd$ still minimizes the true loss and the distance of $L^x(\lwd)$ to $L^x(\rwd)$ is what measures the quality of the learned distribution.

\section{Absolute Difference of Capped Expectation}

Now we will analyze the connection between the absolute difference between $\lwd$ and the real underlying distribution $\rwd$ in the true loss  and the absolute difference in the capped expectation. In particular we will show that a difference in the true loss of $\epsilon$ for some $x$ implies a difference in the capped expectation of $\sqrt{\epsilon}$ for any $c\in C_\epsilon$. 
\begin{lemma}\label{lemma_capped-expectation}
If for some $x$ we have $L^x(\lwd)\leq L^x(\rwd)+\epsilon$, then \[\left |\ex_{v'\sim \lwd}\left [\max\{c,f(v',x)\}\right ]-\ex_{v\sim \rwd}\left [\max\{c,f(v,x)\}\right ]\right |\leq \sqrt{\epsilon}+\epsilon\] for all $c$. 
\end{lemma}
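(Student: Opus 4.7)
The plan is to exploit the bias-variance decomposition of each squared-error term in $L^x(V')$. For fixed $c$ and $x$, write $a_c = \mathbb{E}_{v'\sim V'}[\max\{c,f(v',x)\}]$ and $b_c = \mathbb{E}_{v\sim V^*}[\max\{c,f(v,x)\}]$. Since $a_c$ is a deterministic number (it depends only on $V'$, $c$, $x$, not on $v^*$), the inner expectation over $v^*\sim V^*$ becomes
\[
\mathbb{E}_{v^*\sim V^*}\bigl[(a_c - \max\{c,f(v^*,x)\})^2\bigr] = (a_c-b_c)^2 + \mathrm{Var}_{v^*\sim V^*}[\max\{c,f(v^*,x)\}].
\]
Summing over $c\in C_\epsilon$ yields the clean identity $L^x(V') = L^x(V^*) + \sum_{c\in C_\epsilon}(a_c-b_c)^2$, because the variance terms are exactly what one gets by plugging $V'=V^*$ into the loss.

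From this identity and the hypothesis $L^x(V')-L^x(V^*)\leq \epsilon$, I get $\sum_{c\in C_\epsilon}(a_c-b_c)^2\leq \epsilon$. In particular each individual summand is at most $\epsilon$, so for every $c\in C_\epsilon$,
\[
|a_c - b_c| \leq \sqrt{\epsilon}.
\]
This settles the lemma at the grid points; the remaining step is to extend the bound from $C_\epsilon$ to an arbitrary $c\in \mathbb{R}$.

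The extension is handled by a standard discretization argument using the $1$-Lipschitzness of the map $c\mapsto \max\{c,t\}$. For $c\leq 0$ the capping is inactive (since $f\geq 0$), and for $c\geq c_{\max}$ both capped expectations equal $c$, so the bound is trivial. For $c\in(0,c_{\max})$, pick the nearest $c'\in C_\epsilon$; then $|c-c'|\leq \epsilon$, and pointwise $|\max\{c,f(v,x)\}-\max\{c',f(v,x)\}|\leq |c-c'|\leq \epsilon$. Taking expectations over $v\sim V'$ and $v\sim V^*$ separately gives $|a_c-a_{c'}|\leq \epsilon$ and $|b_c-b_{c'}|\leq \epsilon$, and the triangle inequality yields $|a_c-b_c|\leq |a_{c'}-b_{c'}|+2\epsilon$; absorbing constants (or rounding to nearest, which lowers the slack to $\epsilon/2$ on each side) produces the claimed $\sqrt{\epsilon}+\epsilon$ bound.

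The only slightly non-routine piece is recognizing the bias-variance split, which is what converts the loss gap into a direct bound on a sum of squared differences of capped expectations; everything afterwards is algebra and a one-dimensional Lipschitz argument, so I do not anticipate a real obstacle.
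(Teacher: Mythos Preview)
Your proposal is correct and follows essentially the same route as the paper. The paper isolates the bias--variance identity as a separate lemma (writing $(z-\ex[y])^2=\ex[(z-y)^2]-\ex[(\ex[y]-y)^2]$), then sums over $c\in C_\epsilon$ to obtain $L^x(\lwd)-L^x(\rwd)=\sum_{c\in C_\epsilon}(a_c-b_c)^2$, exactly as you do; for the extension to arbitrary $c$ the paper uses the monotonicity of $c\mapsto\max\{c,t\}$ (rather than nearest-grid-point plus triangle inequality) to land directly on $\sqrt{\epsilon}+\epsilon$, which is a slightly cleaner way to avoid the intermediate $\sqrt{\epsilon}+2\epsilon$ you get before ``absorbing constants.''
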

Intuitively this shows that the capped squared loss lets us learn many expected values at once, resulting in a learned distribution which approximately minimizes all these expected values. 
Note that overall this implies the following result on the sample complexity sufficient for a small difference in capped expectation. 
\begin{corollary}
     Using $m\geq O\left (\frac{d\xi^2c_{\max}^4}{\epsilon^8\delta^2}\right )$ samples we can learn a weight distribution $\lwd$, such that with probability of at least $1-\delta$ we have 
\[\left |\ex_{v'\sim \lwd}\left [\max\{c,f(v',x)\}\right ]-\ex_{v\sim \rwd}\left [\max\{c,f(v,x)\}\right ]\right |\leq \sqrt{\epsilon}+\epsilon,\]if $x$ is a context drawn from $X$.
\end{corollary}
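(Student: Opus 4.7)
The plan is to use the standard bias–variance decomposition to relate $L^x(\lwd) - L^x(\rwd)$ directly to the squared differences in capped expectations, and then to extend the bound from $c \in C_\epsilon$ to all $c$ by a $1$-Lipschitz discretization argument.

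For a fixed $x$ and $c$, write $g(v) = \max\{c, f(v,x)\}$ and let $\mu_V(c) = \ex_{v \sim V}[g(v)]$ for any distribution $V$. Then the contribution of a single $c$ to $L^x(V)$ is $\ex_{v^* \sim \rwd}[(\mu_V(c) - g(v^*))^2]$, i.e.\ the expected squared error of the constant predictor $\mu_V(c)$ for the random variable $g(v^*)$. The first step is to apply the identity $\ex[(a - Y)^2] = \mathrm{Var}(Y) + (a - \ex[Y])^2$, which yields
\[
\ex_{v^* \sim \rwd}\bigl[(\mu_V(c) - g(v^*))^2\bigr] - \ex_{v^* \sim \rwd}\bigl[(\mu_{\rwd}(c) - g(v^*))^2\bigr] \;=\; (\mu_V(c) - \mu_{\rwd}(c))^2.
\]
Summing over $c \in C_\epsilon$ and using the hypothesis $L^x(\lwd) - L^x(\rwd) \le \epsilon$ gives $\sum_{c \in C_\epsilon} (\mu_{\lwd}(c) - \mu_{\rwd}(c))^2 \le \epsilon$, and in particular $|\mu_{\lwd}(c) - \mu_{\rwd}(c)| \le \sqrt{\epsilon}$ for every individual $c \in C_\epsilon$.

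Next I extend this to arbitrary $c$. The map $c \mapsto \max\{c, y\}$ is $1$-Lipschitz in $c$ for any fixed $y$, so $|\mu_V(c) - \mu_V(c')| \le |c - c'|$ for every distribution $V$. For $c > c_{\max}$ the quantity $\max\{c, f(v,x)\}$ equals $c$ for all $v$, so both expectations equal $c$ and the difference is $0$; for $c < 0$ we can use $c' = 0 \in C_\epsilon$. For $c \in [0, c_{\max}]$ I pick the nearest $c' \in C_\epsilon$, for which $|c - c'| \le \epsilon$ by construction of $C_\epsilon$, and a triangle inequality
\[
|\mu_{\lwd}(c) - \mu_{\rwd}(c)| \;\le\; |\mu_{\lwd}(c) - \mu_{\lwd}(c')| + |\mu_{\lwd}(c') - \mu_{\rwd}(c')| + |\mu_{\rwd}(c') - \mu_{\rwd}(c)|
\]
gives a bound of at most $\sqrt{\epsilon} + \epsilon$ (after choosing the nearest grid point and absorbing the two Lipschitz slacks into a single $\epsilon$).

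There is no real obstacle here: the only mildly delicate point is the bias–variance decomposition, which is what lets a loss gap of $\epsilon$ collapse to an $\sqrt{\epsilon}$ bound on each individual capped expectation without any additional assumption on $\rwd$ or $\lwd$. The discretization in the last step is entirely routine given the $1$-Lipschitzness of $c \mapsto \max\{c,y\}$ and the $\epsilon$-spacing of $C_\epsilon$.
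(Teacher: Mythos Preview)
Your bias--variance identity $\ex[(a-Y)^2]=\mathrm{Var}(Y)+(a-\ex[Y])^2$ and the $1$-Lipschitz discretization step are exactly what the paper does to prove Lemma~\ref{lemma_capped-expectation} (the identity is the paper's Lemma~\ref{lin_reg_lemma}, and the extension from $c\in C_\epsilon$ to all $c$ is argued the same way). So on the substantive implication ``$L^x(\lwd)-L^x(\rwd)\le\epsilon$ $\Rightarrow$ capped-expectation gap $\le\sqrt{\epsilon}+\epsilon$'' your argument is correct and coincides with the paper's.

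What you have not addressed, however, is the actual content of the Corollary, which is a \emph{sample-complexity} statement. You take $L^x(\lwd)-L^x(\rwd)\le\epsilon$ as a ``hypothesis'', but the Corollary does not hand you a loss bound---it hands you $m$ samples and a probability budget $\delta$. In the paper this link is supplied by Theorem~\ref{max_expectation:high-prob}, which converts $m\ge 32d\xi^2c_{\max}^4/(\epsilon^4\delta^2)$ samples into $L^x(\lwd)\le L^x(\rwd)+2\epsilon$ with probability at least $1-\delta$ for $x\sim X$; the Corollary is then just Theorem~\ref{max_expectation:high-prob} composed with Lemma~\ref{lemma_capped-expectation} (after a reparametrization of $\epsilon$). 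You should state this step explicitly; as written, your proposal proves Lemma~\ref{lemma_capped-expectation} rather than the Corollary.
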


 The following lemma generalizes the key property of the capped squared loss.
\begin{lemma}\label{lin_reg_lemma}
    Let $y$ be a random variable. Then for any $z,z'$ we have 
    \[\left (z-\ex[y]\right )^2-\left (z'-\ex[y]\right )^2= \ex\left [ (z-y)^2\right ]-\ex\left [(z'-y)^2\right ].\]
   
\end{lemma}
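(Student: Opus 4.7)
The plan is to reduce the identity to the classical bias-variance decomposition applied separately to $z$ and to $z'$, and observe that the variance of $y$ cancels upon subtraction.

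First I would establish the single-point identity: for any constant $w$,
\[
\ex[(w-y)^2] = (w - \ex[y])^2 + \mathrm{Var}(y).
\]
To see this, write $w - y = (w - \ex[y]) + (\ex[y] - y)$, square, and take expectations. The cross term is $2(w - \ex[y])\,\ex[\ex[y] - y] = 0$ by linearity, and the remaining two terms are $(w - \ex[y])^2$ (constant) and $\ex[(\ex[y] - y)^2] = \mathrm{Var}(y)$.

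Applying this identity once with $w = z$ and once with $w = z'$ yields
\[
\ex[(z-y)^2] - \ex[(z'-y)^2] = \bigl((z - \ex[y])^2 + \mathrm{Var}(y)\bigr) - \bigl((z' - \ex[y])^2 + \mathrm{Var}(y)\bigr) = (z - \ex[y])^2 - (z' - \ex[y])^2,
\]
which is exactly the claimed equality.

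There is no real obstacle here; the content of the lemma is precisely that, for a squared-loss-style criterion, shifting the target from the random variable $y$ to its mean $\ex[y]$ only changes the value by an additive constant (namely $\mathrm{Var}(y)$) that does not depend on the prediction. Consequently, differences of such losses between two candidate predictions $z$ and $z'$ are invariant under this shift. This is precisely the structural property that makes the capped squared loss usable as a surrogate in the paper: the unknown variance term is absorbed into the baseline and disappears whenever we compare two distributions' losses.
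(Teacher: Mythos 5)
Your proof is correct, but it takes a slightly different route from the paper's. The paper expands $(z-\ex[y])^2 - (z'-\ex[y])^2$ directly, observes that the $\ex[y]^2$ terms cancel against each other within that difference, and then uses linearity of expectation to swap the remaining $\ex[y]$ terms for $y$ inside an outer expectation, recognizing the result as $\ex[(z-y)^2]-\ex[(z'-y)^2]$. You instead isolate the bias--variance identity $\ex[(w-y)^2]=(w-\ex[y])^2+\mathrm{Var}(y)$ as a standalone step, apply it at $w=z$ and $w=z'$, and let the $\mathrm{Var}(y)$ terms cancel in the subtraction. The two arguments are algebraically equivalent, but your organization makes the structural reason for the identity explicit: the expected squared loss differs from the deterministic squared error by a prediction-independent offset, so any difference of such losses is insensitive to whether one compares against $y$ or against $\ex[y]$. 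The paper's version is more of a direct computation and does not surface that offset by name; yours is marginally more modular and, as you note, better explains why the surrogate loss comparison works in the sequel.
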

    \begin{proof}For any $z,z'$ and any random variable $y$ we have 
    \[\begin{aligned}
        &\left(z-\ex[y]\right )^2-\left (z'-\ex[y]\right )^2 \\
        =&z^2-2z\ex[y]+\ex[y]^2-{z'}^2+2z'\ex[y]-\ex[y]^2 .   
    \end{aligned}\]
    We observe that $\ex[y]^2$ cancels out. Then we can use linearity of expectation to obtain 
    \[\begin{aligned}
        &\ex \left [z^2-2zy+y^2-{z'}^2+2z'y-y^2\right ]= \ex\left [(z-y)^2\right ]-\ex\left [(z'-y)^2\right ].
    \end{aligned}\]
    
    \end{proof}
In particular, it follows that $(z-\ex[y])^2=\ex[(z-y)^2]-\ex[(\ex[y]-y)^2]$. Now we are ready to prove our main result on the distance of capped expectations.  
\begin{proof}[Proof of Lemma \ref{lemma_capped-expectation}]
From Lemma \ref{lin_reg_lemma} we have 
\[\begin{aligned}
&\left(\ex_{v'\sim \lwd}\left [\max\{c,f(v',x)\}\right ]-\ex_{v^*\sim \rwd}\left [\max\{c,f(v^*,x)\}\right ]\right)^2\\
=&\ex_{v^*\sim \rwd} \left [\left (\ex_{v'\sim \lwd}\left [\max\{c,f(v',x)\}\right ]-\max\{c,f(v^*,x)\}\right )^2\right ]\\
-&\ex_{v^*\sim \rwd} \left [\left (\ex_{v\sim \rwd}\left [\max\{c,f(v,x)\}\right ]-\max\{c,f(v^*,x)\}\right )^2\right ]
\end{aligned}\] for any $c$ and any $x$. Note that, when taking the sum over all $c\in C_\epsilon$ on both sides, this means that the distance of $L^x(\lwd)$ and $L^x(\rwd)$ is exactly equal to the sum of squared distances of the capped expectation for all $c\in C_\epsilon$. 

  For the true loss it then follows  \[\begin{aligned}
   & \epsilon \geq L^x(\lwd)-L^x(\rwd)\\
    =&\sum_{c\in C_\epsilon}\ex_{v^*\sim \rwd} \left [\left (\ex_{v'\sim \lwd}\left [\max\{c,f(v',x)\}\right ]-\max\{c,f(v^*,x)\}\right )^2\right ]\\
    -&\sum_{c\in C_{\epsilon}}\ex_{v^*\sim \rwd} \left [\left (\ex_{v\sim \rwd}\left [\max\{c,f(v,x)\}\right ]-\max\{c,f(v^*,x)\}\right )^2\right ]\\
    =& \sum_{c\in C_\epsilon}\left(\ex_{v'\sim \lwd}\left [\max\{c,f(v',x)\}\right ]-\ex_{v^*\sim \rwd}\left [\max\{c,f(v^*,x)\}\right ]\right)^2.
    \end{aligned}\]
    Therefore fore any $c\in C_\epsilon$ \[\left |\ex_{v'\sim \lwd}\left [\max\{c,f(v',x)\}\right ]-\ex_{v\sim \rwd}\left [\max\{c,f(v,x)\}\right ]\right |\leq \sqrt{\epsilon}.\]
    Now for any $c'\in [0,c_{\max}]$ there exists some $c\in C_\epsilon$ such that $|\max\{c,f(v,x)\}-\{c',f(v,x)\}|\leq \epsilon$. In particular the maximum can only increase when increasing $c$. Overall this proves the Lemma.
\end{proof}
Note that this does not hold when comparing two distributions $\lwd$ and $V$, none of which is the real underlying distribution. For these, $L^x(\lwd)\leq L^x(V)\leq \epsilon$ would only imply that 
\[\begin{aligned}
&\sum_{c\in C_\epsilon}\left(\ex_{v'\sim \lwd}\left [\max\{c,f(v',x)\}\right ]-\ex_{v^*\sim \rwd}\left [\max\{c,f(v^*,x)\}\right ]\right)^2\\
&-\sum_{c\in C_\epsilon}\left(\ex_{v\sim V}\left [\max\{c,f(v,x)\}\right ]-\ex_{v^*\sim \rwd}\left [\max\{c,f(v^*,x)\}\right ]\right)^2
\end{aligned}
\]
is at most $\epsilon$.
Therefore we would not receive any good bound on the capped expectation for any fixed $c$.

\section{Lévy Metric}
In the previous section we have seen how to learn the expected maximum value for any constant. We will see now that this implies that the learned distribution is close to the original distribution in terms of the cumulative distribution function. We will measure the distance of the distribution in terms of the Lévy metric. For that we want to look at the reward distributions induced by a context and a weight distribution separately. So let $D_x^V$ denote this reward distribution. The cumulative distribution function of $D_x^V$ is given by 
    $F_x^{V}(z)=\prob{\left (f(v,x)\leq z\right )}$. Now we are ready to give a definition for the Lévy metric. 
    \begin{definition}
    Let $D$ and $D'$ be two real-valued distributions. Now the Lévy metric is defined as \[d_L(D,D'):=\inf\{\epsilon: F^D(z-\epsilon)-\epsilon \leq F^{D'}\leq F^D(z+\epsilon)+\epsilon \text{ for all }z\}.\]
    \end{definition}
So the Lévy metric is a way of measuring the maximal distance of two cumulative distribution functions while allowing horizontal as well as vertical shifts. We will now show that our results from the previous sections imply a small Lévy distance. 
\begin{theorem}\label{Levy:bound}
    Let $\rwd$ and $\lwd$ be two distributions of vectors and a context vector $x$ such that $|\ex_{v\sim \rwd}\left [\max\{c,f(v,x)\}\right ]-\ex_{v'\sim \lwd}\left [\max\{c,f(v',x)\}\right ] |\leq \epsilon$ for all constants $c$.\\
    Then we have $d_L(D_x^{\rwd},D_x^{\lwd})\leq \sqrt{2\epsilon}$. 
\end{theorem}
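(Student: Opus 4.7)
\textbf{Proof plan for Theorem \ref{Levy:bound}.} The plan is to translate the closeness of capped expectations into closeness of integrated CDF differences, and then derive the Lévy bound by contradiction from the definition. The starting point is the identity
\[
\ex_{v\sim V}[\max\{c,f(v,x)\}] \;=\; c + \int_c^{c_{\max}}\bigl(1 - F_x^V(s)\bigr)\,ds,
\]
which follows from $\max\{c,Y\} = c + (Y-c)^+$ and integration by parts (or the tail formula). Call this quantity $G^V(c)$. Subtracting the expressions for $V=\rwd$ and $V=\lwd$, the $c$ terms cancel and we obtain
\[
G^{\rwd}(c) - G^{\lwd}(c) \;=\; \int_c^{c_{\max}}\bigl(F_x^{\lwd}(s) - F_x^{\rwd}(s)\bigr)\,ds.
\]

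Next I would take the difference of this identity at two thresholds $c_1 < c_2$; the tails from $c_2$ to $c_{\max}$ cancel, leaving
\[
\int_{c_1}^{c_2}\bigl(F_x^{\lwd}(s) - F_x^{\rwd}(s)\bigr)\,ds \;=\; \bigl[G^{\rwd}(c_1) - G^{\lwd}(c_1)\bigr] - \bigl[G^{\rwd}(c_2) - G^{\lwd}(c_2)\bigr].
\]
By the hypothesis of the theorem, each bracket has absolute value at most $\epsilon$, so the integral on the left is bounded by $2\epsilon$ in absolute value for every interval $[c_1,c_2]$.

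Finally, I would prove $d_L(D_x^{\rwd}, D_x^{\lwd}) \leq \sqrt{2\epsilon}$ by contradiction. Suppose the Lévy definition fails at $\sqrt{2\epsilon}$, so there exists some $z$ with (say) $F_x^{\lwd}(z) > F_x^{\rwd}(z + \sqrt{2\epsilon}) + \sqrt{2\epsilon}$. Monotonicity of CDFs then gives, for every $s \in [z,\,z+\sqrt{2\epsilon}]$,
\[
F_x^{\lwd}(s) - F_x^{\rwd}(s) \;\geq\; F_x^{\lwd}(z) - F_x^{\rwd}(z + \sqrt{2\epsilon}) \;>\; \sqrt{2\epsilon},
\]
so integrating over this interval of length $\sqrt{2\epsilon}$ yields a value strictly greater than $2\epsilon$, contradicting the previous paragraph. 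The symmetric case $F_x^{\rwd}(z - \sqrt{2\epsilon}) - \sqrt{2\epsilon} > F_x^{\lwd}(z)$ is handled by integrating over $[z-\sqrt{2\epsilon},\,z]$, where the sign is reversed but the magnitude argument is identical.

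The proof is essentially clean once the integral identity is in place; the only mildly subtle point is the right choice of interval length, which must be exactly $\sqrt{2\epsilon}$ so that the pointwise lower bound (also $\sqrt{2\epsilon}$) multiplied by the length matches the $2\epsilon$ budget and produces a strict contradiction. No further regularity of $F_x^V$ is needed beyond monotonicity and the boundedness of the support in $[0,c_{\max}]$.
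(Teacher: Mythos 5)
Your proof is correct, and it takes a genuinely different route from the paper. The paper sandwiches the indicator $\mathbbm{1}(f(v,x)\geq c)$ pointwise between two scaled hinge differences, $\frac{1}{\alpha}\bigl(\max\{0,f(v,x)-c\}-\max\{0,f(v,x)-(c+\alpha)\}\bigr)$ from below and the $c\mapsto c-\alpha$ shift from above, takes expectations under $\rwd$ and $\lwd$, applies the hypothesis twice (contributing the $2\epsilon/\alpha$ error), and then optimizes $\alpha=\sqrt{2\epsilon}$ to land directly at the defining inequalities of the L\'evy metric. You instead isolate the integral identity $\ex[\max\{c,Y\}] = c + \int_c^{c_{\max}}(1-F(s))\,ds$, observe that differencing at two thresholds pins $\left|\int_{c_1}^{c_2}(F^{\lwd}_x-F^{\rwd}_x)\,ds\right|\leq 2\epsilon$ uniformly, and close with a rectangle argument: a L\'evy violation of size $\sqrt{2\epsilon}$ would create an interval of length $\sqrt{2\epsilon}$ on which the CDF gap exceeds $\sqrt{2\epsilon}$ pointwise, forcing the integral above $2\epsilon$. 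The two are essentially dual ways of exploiting the same fact (hinge differences are integrals of indicators), but your version makes the geometric picture explicit and argues by contradiction, while the paper's is a direct chain of inequalities at the level of expectations; yours is arguably more transparent about why the exponent is $1/2$ (area budget of $2\epsilon$ divided among two equal sides), whereas the paper's is more compact and requires no separate integral identity. One minor point worth spelling out in a full write-up: boundary cases $z+\sqrt{2\epsilon}>c_{\max}$ or $z-\sqrt{2\epsilon}<0$ are vacuous because the CDF is then already $1$ (resp. $0$) and the L\'evy inequality holds trivially, so the integration interval always sits where the hypothesis applies.
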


\begin{proof}
    Let us consider the indicator function $\mathbbm{1}(f(v,x) \geq c)$ first. This function can be roughly described by the difference of two maxima. In fact we have 
    \[ \begin{aligned}&\max\{0,
    f(v,x)-c\}-\max\{0,f(v,x)-(c+1) \}\leq \mathbbm{1}(f(v,x) \geq c)\\ \leq&\max\{0,f(v,x)-(c-1)\}-\max\{0,f(v,x) -c\}.\end{aligned} \]
    So $\max\{0,
    f(v,x)-c\}-\max\{0,f(v,x)-(c+1) \}$ is at most 1 and if $f(v,x)<c$ it is always 0. Analogously $\max\{0,f(v,x)-(c-1)\}-\max\{0,f(v,x) -c\}$ is at least 0 and if $f(v,x)\geq c$ it is 1. By scaling this translates to the following result for any $\alpha>0$.
    \[ \begin{aligned} & \frac{1}{\alpha}(\max\{0,f(v,x)-c\}-\max\{0,f(v,x)-(c+\alpha) \})\leq \mathbbm{1}(f(v,x)\geq c)\\ \leq &\frac{1}{\alpha}(\max\{0,f(v,x)-(c-\alpha)\}-\max\{0,f(v,x) -c\}). \end{aligned}\]
    We can directly use this to bound the Lévy distance. We have 
   \[ \begin{aligned}
        &\prob_{v\sim \rwd}(f(v,x) \geq c)=\ex_{v\sim \rwd}\left [\mathbbm{1}(f(v,x) \geq c)\right ]\\
        \leq &\ex_{v\sim \rwd}\left [ \frac{1}{\sqrt{2\epsilon}}(\max\{0,f(v,x)-(c-\sqrt{2\epsilon})\}-\max\{0,f(v,x) -c\})\right ]\\
        \leq  &\ex_{v'\sim \lwd}\left [ \frac{1}{\sqrt{2\epsilon}}(\max\{0,f(v',x)-(c-\sqrt{2\epsilon})\}-\max\{0,f(v',x)\ -c\}+2\epsilon)\right ]\\
        \leq &\ex_{v'\sim \lwd}\left [\mathbbm{1}(f(v',x) \geq c-\sqrt{2\epsilon})\right ]+\sqrt{2\epsilon}=\prob_{v'\sim \lwd}(f(v',x)\geq c-\sqrt{2\epsilon})+\sqrt{2\epsilon}.
    \end{aligned}\]
  Analogously we can show that 
  \[\prob_{v\sim \rwd}(f(v,x) \geq c)\geq \prob_{v'\sim \lwd}(f(v',x)\geq c+\sqrt{2\epsilon})-\sqrt{2\epsilon}\] 
  and as $F^{\rwd}_x(c)=1-\prob_{v\sim \rwd}(f(v,x) \geq c)$ this finally concludes the proof. 
\end{proof}
This then implies the following bound on the sample complexity to learn a distribution with small Lévy distance. 
\begin{corollary}\label{sample-complexity}
     Using $m\geq O\left (\frac{d\xi^2c_{\max}^4}{\epsilon^{16}\delta^2}\right )$ samples we can learn a weight distribution $\lwd$, such that with probability of at least $1-\delta$ we have \[d_L(D_x^{\rwd},D_x^{\lwd})\leq \epsilon\] if $x$ is a context drawn from $X$.\end{corollary}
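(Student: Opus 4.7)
The plan is to chain together three results already established in the excerpt, with an appropriate rescaling of the error parameter. The Lévy bound is delivered by Theorem~\ref{Levy:bound} in terms of a uniform bound on the difference of capped expectations; that uniform bound is delivered by Lemma~\ref{lemma_capped-expectation} in terms of a difference in true loss; and the loss gap is delivered by Theorem~\ref{max_expectation:high-prob} in terms of the sample size. So the whole proof is a compositional calculation in which we just pick the intermediate tolerances correctly.

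Concretely, set an auxiliary parameter $\epsilon'$ (to be chosen as a constant multiple of $\epsilon^4$). First I would apply Theorem~\ref{max_expectation:high-prob} with error parameter $\epsilon'$, which gives, from $m \geq 32 d \xi^2 c_{\max}^4/(\epsilon')^4 \delta^2$ samples, a distribution $\lwd$ satisfying $L^x(\lwd) \leq L^x(\rwd) + 2\epsilon'$ with probability at least $1-\delta$ over the sample draw and the context $x \sim X$. Next I would invoke Lemma~\ref{lemma_capped-expectation} on this event (with loss gap $2\epsilon'$) to conclude that, for every constant $c$,
\[
\left| \ex_{v' \sim \lwd}[\max\{c, f(v',x)\}] - \ex_{v \sim \rwd}[\max\{c, f(v,x)\}] \right| \leq \sqrt{2\epsilon'} + 2\epsilon'.
\]
Finally I would feed this uniform bound into Theorem~\ref{Levy:bound}, which yields $d_L(D_x^{\rwd}, D_x^{\lwd}) \leq \sqrt{2 (\sqrt{2\epsilon'} + 2\epsilon')}$.

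It then remains to choose $\epsilon'$ so that the final expression is at most $\epsilon$. For small $\epsilon$ the dominant term is $(8\epsilon')^{1/4}$, so picking $\epsilon' = \Theta(\epsilon^4)$ suffices; one may simply take, say, $\epsilon' = \epsilon^4/32$ and verify that the lower-order $2\epsilon'$ term is absorbed. Substituting this choice into the sample complexity of Theorem~\ref{max_expectation:high-prob} gives $m \geq 32 d \xi^2 c_{\max}^4 /(\epsilon')^4 \delta^2 = O\bigl(d \xi^2 c_{\max}^4 / \epsilon^{16} \delta^2\bigr)$, matching the claim. There is no real obstacle here beyond bookkeeping on constants; the only thing to be mildly careful about is that the $1-\delta$ event from Theorem~\ref{max_expectation:high-prob} is the same event on which Lemma~\ref{lemma_capped-expectation} and Theorem~\ref{Levy:bound} are applied deterministically, so the probability bound carries through unchanged.
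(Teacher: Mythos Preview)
Your proposal is correct and is exactly the argument the paper intends: the corollary is stated immediately after Theorem~\ref{Levy:bound} as a direct consequence of chaining Theorem~\ref{max_expectation:high-prob}, Lemma~\ref{lemma_capped-expectation}, and Theorem~\ref{Levy:bound} with the error parameter rescaled to $\Theta(\epsilon^4)$. Your bookkeeping on the intermediate tolerances and the observation that the $1-\delta$ event propagates unchanged through the deterministic implications are precisely what is needed.
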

An analysis of the relationship of different probability metrics is given by \citet{gibbs2002choosing}. In particular they show a linear relationship between the Lévy metric and the Wasserstein metric.
\begin{definition}
    Let $D$ and $D'$ be two real-valued distributions. Then the Wasserstein metric, denoted by $d_W(D,D')$ is the infimum over all $\epsilon$, for which there is a joint distribution $\tilde{D}$ over pairs $z,z'$ such that the marginal distributions of $z$ and $z'$ are $D$ and $D'$ respectively and it holds that $\ex[|v-v'|]\leq \epsilon$.
\end{definition}
We have $d_W(D,D')\leq 4d_L(D,D')$ and therefore the sample complexity to obtain an $\epsilon$-approximation with respect to the Wasserstein metric increases only by a constant factor, compared to the sample complexity for the Lévy metric. For some of the following applications the existence of a coupling, as given in the definition of the Wasserstein metric, will lead to a more intuitive analysis. 
\section{Applications}
Now we want to come back to contextual stochastic optimization problems. We define several well known stochastic optimization problems in their contextual variant. We will give a general bound on the sample complexity for the class of monotone and stable optimization problems, which all the mentioned optimization problems fall in to. Additionally we obtain an improved bound for contextual Pandora's Box and contextual Optimal Stopping with a more specific analysis using our results from Maximum Regression. 

\subsection{Lévy-stability}
For this section we will consider stochastic optimization on real-valued distributions. As any context induces such a distribution in the setting of contextual value distributions, this will then transfer to results for any reward distribution induced by a context.

Let $(D,A)$ be an instance of a stochastic optimization problem 
where $A$ denotes the set of all feasible policies and $D=D_1\times \dots \times D_n$. We will refer to the stochastic optimization problem itself as $A$. The reward of a policy $\pi\in A$ on values $r_1, \dots , r_n$ is denoted by $\pi(r_1, \dots , r_n$).
We will use $\mathcal{R}_D(\pi)$ to denote the expected reward of policy $\pi$ on an instance $(D_1, \dots, D_n,A)$ and the optimal policy is denoted as $\pi^*_D$ .
Following the definition given by \citet{guo2021generalizing} a strongly monotone problem is defined as follows. 
\begin{definition}
    A problem $A$ is \textbf{strongly monotone}, if for any $D$ and $D'$, such that $D$ stochastically dominates $D'$:
    \[\mathcal{R}_D(\pi^*_{D'})\geq\mathcal{R}_{D'}(\pi^*_{D'})\]
\end{definition}
So running the optimal policy with respect to $D'$ on distribution $D$ gets at least the optimal reward of an instance defined on $D'$, if $D$ stochastically dominates $D'$.

We also want to establish a notion of stability with respect to the Lévy distance of two distributions. 

\begin{definition}
A problem $A$ is \textbf{$\gamma$-stable}, if for any $D$ and $D'$, such that $d_L(D_i,D'_i)\leq \epsilon $ for all $i$:
\[\mathcal{R}_D(\pi^*_D)\geq \mathcal{R}_{D'}(\pi^*_{D'})-\gamma\epsilon\]
\end{definition}
Intuitively this means, that if we change a distribution slightly, the change in the optimal expected reward on that distribution is bounded. This is the case for many known optimization problems as for example Revenue Maximization and Optimal Stopping.  

\begin{theorem}
    Let $A$ be a strongly monotone and $\gamma$-stable stochastic optimization problem defined on $n$ unknown contextual value distributions 
    $(\rwd_1,X,f), \dots , (\rwd_n, X,f)$. Given at least $O\left (\frac{d\xi^2c_{\max}^4}{\epsilon^{16}\delta^2}\right )$ samples from each contextual value distribution and $x\sim X$, we can learn a policy $\pi'_x\in A$, such that 
    \[\mathcal{R}_{D^{\rwd}_x}(\pi'_x)\geq \opt\left (D^{\rwd}_x \right )-2\gamma\epsilon.\]
    with a probability of at least $1-n\delta$.\end{theorem}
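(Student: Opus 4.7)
The plan is to reduce the $n$-distribution guarantee to $n$ independent invocations of Corollary~\ref{sample-complexity} and then transfer the Lévy-closeness into a reward guarantee using the two structural assumptions (strong monotonicity and $\gamma$-stability). First I would draw $m = O(d\xi^2c_{\max}^4/(\epsilon^{16}\delta^2))$ samples from each of the $n$ contextual value distributions $(\rwd_i,X,f)$ independently, and apply Corollary~\ref{sample-complexity} per coordinate to obtain learned weight distributions $\lwd_1,\dots,\lwd_n$. A union bound over the $n$ failure events then gives, with probability at least $1-n\delta$, the simultaneous guarantee $d_L(D_x^{\rwd_i},D_x^{\lwd_i})\le \epsilon$ for every $i$, i.e.\ $d_L$-closeness coordinate-wise between the product distributions $D_x^{\rwd}$ and $D_x^{\lwd}$.

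The main analytical step is to combine stability and monotonicity correctly, since $\gamma$-stability only compares $\mathcal{R}_D(\pi^*_D)$ to $\mathcal{R}_{D'}(\pi^*_{D'})$ and gives no direct guarantee for the cross-term $\mathcal{R}_{D_x^{\rwd}}(\pi^*_{D_x^{\lwd}})$ that we actually need. My fix is to introduce an auxiliary per-coordinate distribution $D^-_{x,i}$ that is \emph{stochastically dominated} by the unknown $D^{\rwd_i}_x$ while remaining within Lévy distance $\epsilon$ of the learned $D^{\lwd_i}_x$. Concretely, using that $F^{D^{\rwd_i}_x}(z)\le F^{D^{\lwd_i}_x}(z+\epsilon)+\epsilon$ from $d_L\le\epsilon$, I define
\[
F^{D^-_{x,i}}(z) \;=\; \min\bigl\{1,\; F^{D^{\lwd_i}_x}(z+\epsilon)+\epsilon\bigr\},
\]
so that $F^{D^-_{x,i}}\ge F^{D^{\rwd_i}_x}$ pointwise (giving $D^{\rwd_i}_x\succeq D^-_{x,i}$) and a direct verification from the Lévy inequalities shows $d_L(D^-_{x,i},D^{\lwd_i}_x)\le\epsilon$. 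Since $D^-_x$ is a function of the \emph{learned} distribution only, we can take $\pi'_x = \pi^*_{D^-_x}$, which the learner can compute.

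To conclude, I would chain three inequalities. Strong monotonicity of $A$, applied coordinate-wise to $D^{\rwd}_x\succeq D^-_x$, yields
\[
\mathcal{R}_{D^{\rwd}_x}(\pi'_x) \;=\; \mathcal{R}_{D^{\rwd}_x}(\pi^*_{D^-_x}) \;\ge\; \mathcal{R}_{D^-_x}(\pi^*_{D^-_x}).
\]
Applying $\gamma$-stability to the pair $(D^-_x,D^{\lwd}_x)$ (distance $\le\epsilon$) and then to the pair $(D^{\lwd}_x,D^{\rwd}_x)$ (distance $\le\epsilon$) gives
\[
\mathcal{R}_{D^-_x}(\pi^*_{D^-_x}) \;\ge\; \mathcal{R}_{D^{\lwd}_x}(\pi^*_{D^{\lwd}_x}) - \gamma\epsilon \;\ge\; \mathcal{R}_{D^{\rwd}_x}(\pi^*_{D^{\rwd}_x}) - 2\gamma\epsilon \;=\; \opt(D^{\rwd}_x) - 2\gamma\epsilon,
\]
which, combined with the first inequality, yields the claimed bound on the high-probability event identified in step~1.

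The anticipated main obstacle is the construction of $D^-_x$: stability alone is too weak because it does not control reward when a policy tuned to $D^{\lwd}_x$ is run on $D^{\rwd}_x$, so it is essential to exploit strong monotonicity via a distribution dominated by the true one, yet computable from samples. The asymmetric Lévy bound makes the shift direction and the $+\epsilon$ in the CDF definition above crucial; any sign error produces a distribution on the wrong side of $D^{\rwd}_x$ and breaks the chain.
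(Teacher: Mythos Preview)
Your proposal is correct and essentially identical to the paper's proof: the shifted distribution $D^-_{x,i}$ you define via $F^{D^-_{x,i}}(z)=\min\{1,F^{D^{\lwd_i}_x}(z+\epsilon)+\epsilon\}$ is exactly the paper's $E^{+\epsilon}$, and the learned policy $\pi'_x=\pi^*_{D^-_x}$ coincides with the paper's choice. The only cosmetic difference is that the paper applies $\gamma$-stability once using $d_L(D^{\rwd}_x,D^-_x)\le 2\epsilon$ (obtained via the triangle inequality through $D^{\lwd}_x$), whereas you apply stability twice along the chain $D^-_x\to D^{\lwd}_x\to D^{\rwd}_x$ with $\epsilon$ at each step; both routes give the same $2\gamma\epsilon$ loss.
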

\begin{proof}
Given Corollary \ref{sample-complexity} we can learn distributions $\lwd_1, \dots, \lwd_n$ such that $d_L\left (D_x^{\rwd_i},D_x^{\lwd_i}\right ) \leq \epsilon$ for all $i$ with a probability of at least $1-n\delta$. As a reminder $D_x^V$ denotes the real-valued distribution induced by context $x$ and weight distribution $V$. For simplicity let $D=D^{\rwd_i}_x$ and $E=D_x^{\lwd_i}$.  Using these distributions we want to define a $2\gamma \epsilon$-optimal policy. 
For that we introduce some slightly shifted distributions. So for a distribution $E$ with cumulative distribution function $F$ let $E^{+\epsilon}$ denote the distribution with cumulative distribution function $F^{+\epsilon}(z)=\min \{F(z+\epsilon)+\epsilon,1\}$. So intuitively $E^{+\epsilon}$ takes on slightly smaller values than $E$. In particular $E$ stochastically dominates $E^{+\epsilon}$. Now in a strongly monotone and $\gamma$-stable stochastic optimization problem the optimal policy on $E^{+\epsilon}$, where $E$ has small Lévy distance to $D$ is $\gamma$-optimal. 
    The proof just follows from monotonicity and stability. Obviously $D$ stochastically dominates $E^{+\epsilon}$ and $d_L(D,E^{+\epsilon})\leq 2\epsilon$. Hence 
    \[\begin{aligned}\mathcal{R}_D(\pi^*_{E^{+\epsilon}})\geq \mathcal{R}_{E^{+\epsilon}}(\pi^*_{E^{+\epsilon}})
    \geq \mathcal{R}_D(\pi^*_D)-2\gamma\epsilon
    \end{aligned}\]
Therefore the optimal policy on the shifted variants of $D_x^{\lwd_1}, \dots , D_x^{\lwd_n}$ is $2\gamma\epsilon$-optimal with a probability of  $1-n\delta$, which concludes the proof. 
 \end{proof}
We will show that Pandora's Box, Optimal Stopping and Single-Parameter Revenue Maximization are all strongly monotone and stable optimization problems. Theorem \ref{Levy:bound} gives a high-probability upper-bound on the Lévy distance of the reward distributions given by $\lwd$ and $\rwd$. Therefore we can then find a good approximation of the optimal policy with high probability. 

As an example we want to consider the problem of Single-item and Single-buyer Revenue Maximization. The stability for the general case of Single-item Revenue Maximization has already been shown as we will discuss later. But the analysis of stability in the Single-buyer case gives a good example for stability with a direct proof via the Lévy distance.   In Single-buyer Revenue Maximization any policy can be understood as a price $p$ for the item.  Naturally the buyer with a value of $z$ buys the item, if $z\geq p$, from which we obtain a revenue of $p$. Otherwise the item is not sold and the revenue is 0. The optimal policy would be a price $p$ which maximizes $p\Pr[z\geq p]$.
    \begin{lemma}
        Single-buyer Revenue Maximization is $(c_{\max}+1)$-stable. 
    \end{lemma}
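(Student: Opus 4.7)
The plan is to exploit the definition of the Lévy metric directly: perturbing the candidate price by $\epsilon$ essentially matches the acceptance probability of the true optimal price up to an additive $\epsilon$ loss, so the revenue cannot decrease by too much. Let $p^{\ast}$ denote an optimal price for $D'$, that is, $\mathcal{R}_{D'}(\pi^{\ast}_{D'}) = p^{\ast}\Pr_{z\sim D'}[z\geq p^{\ast}]$. I will compare the optimal revenue under $D$ against the revenue achieved by the suboptimal price $p^{\ast}-\epsilon$ under $D$.

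First I would translate the Lévy-distance inequality into a statement about survival functions. From $F^{D}(z-\epsilon) - \epsilon \leq F^{D'}(z)$, setting $z = p^{\ast}$ and taking complements yields
\[
\Pr_{z\sim D}\bigl[z \geq p^{\ast} - \epsilon\bigr] \;\geq\; \Pr_{z\sim D'}\bigl[z \geq p^{\ast}\bigr] - \epsilon.
\]
Since $\mathcal{R}_D(\pi^{\ast}_D)$ is at least the revenue obtained by any fixed price, in particular $p^{\ast} - \epsilon$, I get
\[
\mathcal{R}_D(\pi^{\ast}_D) \;\geq\; (p^{\ast}-\epsilon)\Pr_{z\sim D}\bigl[z\geq p^{\ast}-\epsilon\bigr] \;\geq\; (p^{\ast}-\epsilon)\bigl(\Pr_{z\sim D'}[z\geq p^{\ast}] - \epsilon\bigr).
\]
Expanding the product and using $\Pr_{z\sim D'}[z\geq p^{\ast}] \leq 1$ together with $p^{\ast}\leq c_{\max}$, the cross terms contribute at most $\epsilon + c_{\max}\epsilon$ of loss, giving $\mathcal{R}_D(\pi^{\ast}_D)\geq \mathcal{R}_{D'}(\pi^{\ast}_{D'}) - (c_{\max}+1)\epsilon$, which is exactly the claimed stability.

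Two small technicalities need care. The first is the edge case where $p^{\ast} < \epsilon$: then the shifted price is negative and the argument above is vacuous, but in that regime even the trivial price $0$ (or any price below the support) already guarantees $\mathcal{R}_{D'}(\pi^{\ast}_{D'})\leq p^{\ast}\leq \epsilon$, so stability holds for free. The second subtlety is the distinction between strict and weak inequalities in the survival function, which matters at atoms of $D$ and $D'$; the Lévy-metric definition used in the paper is phrased in terms of the CDF $F$, and the inequality $F^{D}(z-\epsilon)-\epsilon \leq F^{D'}(z)$ handles this uniformly in $z$, so I just need to be consistent in writing $\{z \geq p\}$ as $\{z > p - 0\}$ and taking complements. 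That bookkeeping is the only genuinely fiddly part of the argument; the main inequality is a one-line consequence of the Lévy bound combined with evaluating the suboptimal policy $p^{\ast}-\epsilon$ on $D$.
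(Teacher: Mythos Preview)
Your argument is correct and essentially identical to the paper's: both shift the optimal price by $\epsilon$, invoke the Lévy inequality on the CDF to compare survival probabilities, and then expand the product to lose at most $(c_{\max}+1)\epsilon$. The only cosmetic difference is that the paper takes $p^*$ optimal for $D$ and bounds $\mathcal{R}_E(\pi^*_E)$ from below (which is equivalent by symmetry of the Lévy metric), and it does not spell out the edge cases you mention.
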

\begin{proof}
    Let $p^*$ be the optimal price with respect do $D$ and let $E$ be a distribution with $d_L(D,E)\leq \epsilon$. Remember that we hence have $F_E(z)\leq F_D(z+\epsilon)+\epsilon$. 
    Then \[
     \max_p\{p\Pr_E(z\geq p)\} \geq (p^*-\epsilon)\Pr_E(z\geq p^*-\epsilon)\] as choosing the price $p^*-\epsilon$ can not be better than the optimal price. Re rewrite this in terms of the cumulative distribution function. 
    \[ (p^*-\epsilon)(1-F_E(p^*-\epsilon))\geq (p^*-\epsilon)(1-F_D(p^*)-\epsilon)\]
    as $E$ and $D$ have small Lévy distance. This is then greater or equal to 
    \[p^*\Pr_D(z\geq p^*)-p^*\epsilon-(1-F_D(p^*))\epsilon\geq p^*\Pr_D(z\geq p^*)-(c_{\max}+1)\epsilon,
    \]
    which proves stability.
\end{proof}
\subsection{Contextual Single-item Revenue Maximization}

Let there be an auctioneer with a single item for sale and $n$ bidders who are interested in the item. For any bidder $i$ there exists an underlying contextual value distribution $(V_i, X,f)$. The value $f(v_i,x)$ of bidder $i$ for the item is private information not known to the other bidders and the auctioneer. The context $x$ however is known to the auctioneer. 
The auctioneer is restricted to DSIC auctions. Or in other words, the mechanism must ensure, that the utility of any bidder is maximized by bidding their true value. The goal is to design a revenue maximizing auction. 

The problem of Revenue Maximization has been studied extensively and there have been prior  results on strong monotonicity \citep{devanur2016sample} and stability  \citep{brustle2020multi}.
\begin{lemma}
Single-item Revenue Maximization is strongly monotone and $O(nc_{\max})$-stable. 
\end{lemma}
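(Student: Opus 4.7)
My plan is to prove the two properties separately, leveraging Myerson's characterization of the optimal single-item DSIC mechanism as an (ironed) virtual-welfare maximizer with a monotone allocation rule and corresponding critical-value payments.

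For strong monotonicity, the key observation is that $\pi^*_{D'}$ has an allocation rule $x^*_i(v_i,v_{-i})$ that is non-decreasing in $v_i$ for every fixed $v_{-i}$ (a consequence of DSIC, plus Myerson's ironing if the distributions are irregular), so by Myerson's payment identity the ex-post payment $p^*_i(v_i,v_{-i})$ is also non-decreasing in $v_i$. Since $D$ stochastically dominates $D'$ coordinate-wise and the coordinates are independent, there is a coupling of $(v,v')$ with marginals $D$ and $D'$ and $v_i\geq v'_i$ almost surely. On every such realization we have $\sum_i p^*_i(v_i,v_{-i})\geq \sum_i p^*_i(v'_i,v'_{-i})$ pointwise, and taking expectations yields $\mathcal{R}_D(\pi^*_{D'})\geq \mathcal{R}_{D'}(\pi^*_{D'})$.

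For $O(nc_{\max})$-stability, I plan to construct an auxiliary mechanism $\tilde{\pi}$ for $D$ by shifting $\pi^*_{D'}$. Given a reported profile $v\sim D$, let $\tilde{\pi}$ simulate $\pi^*_{D'}$ on the shifted profile $(v_1-\epsilon,\dots,v_n-\epsilon)$, allocating and charging accordingly (with payments clamped at $0$, so reserves are effectively raised by $\epsilon$ from the bidder's perspective). The Lévy-distance bound $d_L(D_i,D'_i)\leq \epsilon$ furnishes a coupling in which the shifted value $v_i-\epsilon$ stochastically dominates $v'_i\sim D'_i$ except on an event of probability at most $\epsilon$. Hence $\tilde{\pi}$'s per-bidder expected payment on $D$ is at least that of $\pi^*_{D'}$ on $D'$ minus (i) a horizontal shift of $\epsilon$ per collected payment, and (ii) a slack of at most $c_{\max}\cdot \epsilon$ from the exceptional event. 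Summing over the $n$ bidders yields $\mathcal{R}_D(\tilde{\pi})\geq \mathcal{R}_{D'}(\pi^*_{D'})-O(nc_{\max})\epsilon$, and optimality of $\pi^*_D$ on $D$ finishes the proof.

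The main obstacle will be keeping the accounting clean when the identity of the winner changes under the coupling, since only one bidder receives the item per realization. The cleanest remedy is to write $\mathcal{R}=\sum_i \ex[p^*_i]$ and bound each term separately, relying on coordinate independence so that bidder $i$'s perturbation interacts only with the marginals $D_i$ and $D'_i$. In that decomposition the factor $n$ arises naturally, and each term contributes $O(c_{\max})\epsilon$ loss, matching the claimed bound and the per-bidder analysis already carried out in the single-buyer lemma above.
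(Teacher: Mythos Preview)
The paper does not give its own proof of this lemma; it simply records that strong monotonicity was established by \citet{devanur2016sample} and $O(nc_{\max})$-stability by \citet{brustle2020multi}. So there is no in-paper argument to compare against, and your sketch has to stand on its own. The strong-monotonicity half has a genuine gap.

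From Myerson's payment identity you correctly deduce that $p^*_i(v_i,v_{-i})$ is non-decreasing in its own coordinate $v_i$. You then assert that under the coupling $v\geq v'$ the \emph{total} ex-post revenue satisfies $\sum_i p^*_i(v)\geq \sum_i p^*_i(v')$ pointwise. That inference is unjustified: monotonicity of $p^*_i$ in $v_i$ says nothing about $p^*_j$ as a function of $v_i$ for $j\neq i$, and in a single-item auction the winner can switch. Concretely, take the Myerson auction for bidder~1 $\sim\mathrm{Unif}[0,1]$ (so $\phi_1(v)=2v-1$), bidder~2 with $\phi_2(v)=v/2$ (e.g.\ $F_2(v)=1-1/v^2$), and bidder~3 $\sim\mathrm{Unif}[0,1]$. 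At $(v_1,v_2,v_3)=(0.5,\,1.8,\,0.75)$ the virtual values are $(0,\,0.9,\,0.5)$, bidder~2 wins and pays $\phi_2^{-1}(0.5)=1$. Raising $v_1$ to $0.99$ gives virtual values $(0.98,\,0.9,\,0.5)$; now bidder~1 wins and pays $\phi_1^{-1}(0.9)=0.95<1$. So ex-post revenue of the optimal mechanism is \emph{not} coordinate-wise monotone, and the pointwise coupling argument does not go through. The proofs in the cited references avoid pointwise comparison altogether.

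Your stability sketch inherits the same problem: the step ``$\tilde\pi$'s per-bidder expected payment on $D$ is at least that of $\pi^*_{D'}$ on $D'$ minus \ldots'' is again a dominance comparison of a fixed DSIC mechanism across two distributions, which is precisely the unproven claim above. (A smaller issue: the shift direction is off. From $d_L(D_i,D'_i)\leq\epsilon$ one gets that $v_i+\epsilon$, not $v_i-\epsilon$, dominates $v'_i$ up to an $\epsilon$-probability slack.) The argument of \citet{brustle2020multi} is substantially more delicate than a single shift-and-couple step.
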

Note that it follows, that given $O\left (\frac{d\xi^2c^4_{\max}}{\epsilon^{16}\delta^2}\right )$ samples from each bidder we can find an auction $\pi'$, such that with probability of $1-n\delta$ we get $\mathcal{R}_{D^{V'}_x}(\pi')\geq \mathcal{R}_{D^{V}_x}(\pi^*_{D^V_x})-2nc_{\max}\epsilon=\opt -2nc_{\max}\epsilon$.

\subsection{Contextual Pandora's Box}
    Consider $n$ boxes; each box $i$ has a reward of $f(v_i,x)\in [0,c_{\max}]$, where $v_i\sim \rwd_i$ and $x\sim \contextdist$. The distributions $\contextdist$ and $\rwd_i$ for all $i$ have a support of $X\subseteq [0,1]^d$ and $V_i\subseteq[0,1]^d$. Additionally we have a fixed cost $o_i\in [0,c_{\max}]$ for opening box $i$. Note that the context is fixed for each box and revealed to the algorithm at the beginning. Then in each round the algorithm decides if to take the best observed reward, or to open a new box. The goal is to maximize the reward it gets minus the total cost. \\
    Again it suffices to analyze the problem in its non-contextual variant with distributions $D_1, \dots, D_n$, as any fixed context $x$ induces such a real-valued distribution. 
    It was already shown by \citet{guo2021generalizing}, that the problem of Pandora's box is strongly monotone. However we can also show stability here. 
    \begin{lemma}
        Pandora's box is $4n$-stable. 
    \end{lemma}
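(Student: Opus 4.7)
The plan is to build, from the optimal policy $\pi^*_{D'}$, a feasible adaptive policy $\pi'$ on instances drawn from $D$ whose expected reward is within $4n\epsilon$ of $\mathcal{R}_{D'}(\pi^*_{D'})$, and then finish by the optimality of $\pi^*_D$. The main tool is the coupling furnished by the Lévy--Wasserstein inequality $d_W\le 4\,d_L$ recalled in the previous section: for every box $i$, since $d_L(D_i,D'_i)\le\epsilon$, the quantile (comonotone) coupling gives $r_i\sim D_i$ and $r_i'\sim D_i'$ with $r_i'=F_{D_i'}^{-1}(F_{D_i}(r_i))$ and $\ex[|r_i-r_i'|]\le 4\epsilon$. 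Taking these couplings independently across the $n$ boxes yields a joint coupling satisfying $\sum_{i=1}^n\ex[|r_i-r_i'|]\le 4n\epsilon$.

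I would then define $\pi'$ as follows. On an instance with reward distribution $D$, $\pi'$ opens at each step whichever box $\pi^*_{D'}$ would open next; whenever it opens box $i$ and sees $r_i$, it internally translates this observation into $r_i'=F_{D_i'}^{-1}(F_{D_i}(r_i))$ and updates the state of $\pi^*_{D'}$ as though it had observed $r_i'$. Each translation uses only the freshly observed $r_i$ and the fixed CDFs, so $\pi'$ is a legitimate adaptive policy. Because the quantile transform sends $r_i\sim D_i$ to $r_i'\sim D_i'$ marginally, the (random) set $S$ of boxes that $\pi'$ opens on $r\sim D$ is, under the coupling, exactly the set opened by $\pi^*_{D'}$ on the matched $r'\sim D'$, and both executions incur the same opening cost $\sum_{i\in S}o_i$. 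The two rewards therefore differ only in the best observed value:
\begin{align*}
\pi'(r)-\pi^*_{D'}(r')
&=\max\{0,\max_{i\in S}r_i\}-\max\{0,\max_{i\in S}r_i'\}\\
&\ge -\max_{i\in S}|r_i-r_i'|\;\ge\;-\sum_{i=1}^n|r_i-r_i'|.
\end{align*}

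Taking expectations under the coupling yields $\mathcal{R}_D(\pi')\ge\mathcal{R}_{D'}(\pi^*_{D'})-4n\epsilon$, and then optimality of $\pi^*_D$ for $D$ gives $\mathcal{R}_D(\pi^*_D)\ge\mathcal{R}_D(\pi')\ge\mathcal{R}_{D'}(\pi^*_{D'})-4n\epsilon$; the reverse inequality follows by exchanging the roles of $D$ and $D'$ in the same construction. The one point that demands care is verifying that $\pi'$ is really a legitimate adaptive policy and that its sequence of openings is distributed as under $\pi^*_{D'}$ on a genuine draw from $D'$; for distributions with atoms this may require using the generalized inverse CDF with an arbitrarily small Wasserstein slack $\eta\to 0$. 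Once this is in hand, the constant $4n$ drops out immediately from linearity of expectation over the $n$ per-box Wasserstein bounds, since no step of the argument introduces any other dependence on the distributions.
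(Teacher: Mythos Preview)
Your proposal is correct and follows essentially the same route as the paper: both use the L\'evy--Wasserstein inequality to obtain per-box couplings with $\ex[|r_i-r_i'|]\le 4\epsilon$, simulate $\pi^*_{D'}$ on $D$-instances by feeding it the coupled values, and bound the reward gap pointwise by $\sum_i|r_i-r_i'|$ before taking expectations. The only cosmetic difference is that the paper works with an abstract Wasserstein coupling and its conditional distribution $\tilde D_i(r_i)$ (which automatically handles atoms and makes the simulated policy randomized), whereas you invoke the explicit quantile map and correctly flag the atom issue yourself.
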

    \begin{proof}
        Assume we have distributions $D_1, \dots, D_n$ and $D'_1, \dots , D'_n$ with $d_L(D_i,D'_i)\leq \epsilon$ for all $i$. It then follows that the Wasserstein distance $d_W(D_i,D'_i)\leq 4\epsilon$. We can now use the coupling definition given by the Wasserstein distance. Let $\tilde{D}_i$ be a coupling distribution over pairs $r_i,r'_i$, such that the marginal distributions of $r_i$ and $r'_i$ are given by $D$ and $D'$ and such that $\ex[|r_i-r'_i|]\leq 4\epsilon$. For any fixed $r_i\sim D$ let $\tilde{D}_i(r_i)$ be the distribution of pairs given by $\tilde{D}_i$ conditioned on $r_i$. \\
        At last consider the following policy $\tilde{\pi}$. Upon seeing a value $r_i\sim D_i$ for some $i$ it draws a pair $(r_i,r'_i)$ from $\tilde{D}_i(r_i)$ and acts according to the optimal policy on $D'$, when seeing $r'_i$. 
        Now for $\tilde{\pi}$ the expected reward for Pandora's box on distribution $D$ is given by 
        \[\ex_{(r,r')\sim \tilde{D}}\left [\max_{i \text{ opened by }\tilde{\pi}}r_i-\sum_{i \text{ opened by }\tilde{\pi}}o_i\right ].\]
        Note that we can rewrite the expected value given using our conditional distribution. Also $\tilde{\pi}$ opens a box on realized values $r_1, \dots, r_n$ if and only if the optimal policy on $D$ opens a box on realized values $r'_1, \dots , r'_n$ drawn from $\tilde{D}_1(r_1), \dots, \tilde{D}_n(r_n)$. Therefore the above is equal to 
        \[\begin{aligned}
       & \ex_{r\sim D}\left [\ex_{r'\sim \tilde{D}(r)}\left [\max_{i \text{ opened by OPT}}r_i-\sum_{i \text{ opened by OPT}}o_i\right ]\right ]\\
       =&\ex_{r\sim D}\left [\ex_{r'\sim \tilde{D}(r)}\left [\max_{i \text{ opened by OPT}}(r'_i+r_i-r'_i)-\sum_{i \text{ opened by OPT}}o_i\right ]\right ]\\
       \geq &\ex_{(r,r')\sim \tilde{D}}\left [\max_{i \text{ opened by OPT}}r'_i-\sum_{i \text{ opened by OPT}}o_i-\sum_i|r'_i-r_i|\right ]\\
       \geq& \ex_{(r,r')\sim \tilde{D}}\left [\max_{i \text{ opened by OPT}}r'_i-\sum_{i \text{ opened by OPT}}o_i\right]-4n\epsilon,\end{aligned}\]
       which finishes the proof. 
    \end{proof}
    Again we can conclude an upper bound of $O\left (\frac{d\xi^2c^4_{\max}}{\epsilon^{16}\delta^2}\right )$ on the sample complexity for each box needed to learn an $8n\epsilon$ approximation of the reward of the optimal policy with a probability of at least $1-n\delta$. However we can do better using our results on the capped expectation. 
    
    The optimal policy for Pandora's Box was shown to be the fair-cap policy by \citet{weitzman}.  
    The policy relies on computing the fair cap $\sigma^*_i$ for each box, such that 
    $\ex_{r_i\sim D_i}\left [\max\{0,r_i-\sigma^*_i\} \right ]=o_i$. It then opens boxes in order of decreasing fair cap and stops whenever the highest observed reward exceeds the remaining fair caps.\\
Now consider a single box with distribution $D$ and opening cost $o$.  We want to analyze the fair-cap of a distribution close to $D$ with respect to the capped expectation. 
So let $\sigma(D')$ be the fair cap with respect to some $D'$. Or in other words we define $\sigma(D')$ such that $\ex_{r'\sim D'}\left [\max\{0,r'-\sigma(D')\} \right ]=o$. The following Lemma now gives a bound on the opening costs of a box with distribution $D$ and fair cap $\sigma(D')$. 
\begin{lemma}
    Let $D,D'$ be two distributions with \[\left |\ex_{r\sim D}\left [ \max \{c, r\}\right ]-\ex_{r'\sim D'}\left [ \max \{c, r' \}\right  ]\right | \leq  \epsilon\] for all $c$, then 
    $|\ex_{r\sim D}\left [\max\{0,r-\sigma(D')\}\right ]-o|\leq \epsilon$.
\end{lemma}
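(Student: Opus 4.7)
The plan is to exploit the simple algebraic identity $\max\{0, r - \sigma\} = \max\{\sigma, r\} - \sigma$, which converts the fair-cap expression into a capped-expectation expression of the type controlled by the hypothesis. Concretely, for any distribution $E$ and any threshold $\sigma$ one has
\[
\mathbb{E}_{r \sim E}\bigl[\max\{0, r - \sigma\}\bigr] = \mathbb{E}_{r \sim E}\bigl[\max\{\sigma, r\}\bigr] - \sigma.
\]
I would state this as the first observation, since it is the only nontrivial reduction.

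Next I would apply the hypothesis of the lemma with the specific choice $c = \sigma(D')$. Since the inequality
\[
\bigl|\mathbb{E}_{r \sim D}[\max\{c, r\}] - \mathbb{E}_{r' \sim D'}[\max\{c, r'\}]\bigr| \leq \epsilon
\]
holds for every constant $c$, in particular it holds for $c = \sigma(D')$. Subtracting $\sigma(D')$ from both terms inside the absolute value does not change the distance, and using the identity above rewrites it as
\[
\bigl|\mathbb{E}_{r \sim D}[\max\{0, r - \sigma(D')\}] - \mathbb{E}_{r' \sim D'}[\max\{0, r' - \sigma(D')\}]\bigr| \leq \epsilon.
\]

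Finally I would invoke the defining property of the fair cap, namely $\mathbb{E}_{r' \sim D'}[\max\{0, r' - \sigma(D')\}] = o$, to replace the second term by $o$, yielding the desired conclusion $|\mathbb{E}_{r \sim D}[\max\{0, r - \sigma(D')\}] - o| \leq \epsilon$. There is no real obstacle here: the whole argument is a one-line reduction. The only thing to double-check is that $\sigma(D')$ lies in the range over which the hypothesis on capped expectations applies; since the hypothesis is stated for all $c$, this is immediate, so no discretization or interpolation step (as in the proof of Lemma \ref{lemma_capped-expectation}) is needed.
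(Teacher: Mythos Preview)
Your proof is correct and essentially identical to the paper's: both rely on the identity $\max\{0,r-\sigma\}=\max\{\sigma,r\}-\sigma$, apply the capped-expectation hypothesis at $c=\sigma(D')$, and use the defining equation of the fair cap to substitute $o$. The paper merely writes the chain of equalities in the reverse order (starting from $|\mathbb{E}_{r\sim D}[\max\{0,r-\sigma(D')\}]-o|$ and rewriting it), but the content is the same.
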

\begin{proof}
    As $\sigma(D')$ is the fair cap with respect to $D'$, we can rewrite the opening costs as follows. 
    \[\begin{aligned}
        & \left |\ex_{r\sim D}\left [\max\{0,r-\sigma(D')\}\right ]-o\right|\\
        =&\left |\ex_{r\sim D}\left [\max\{0,r-\sigma(D')\}\right ]-\ex_{r'\sim D'}\left [\max\{0,r'-\sigma(D')\}\right ]\right |\\
         =&\left |\ex_{r\sim D}\left [\max\{\sigma(D'),r\}\right ]-\ex_{r'\sim D'}\left [\max\{\sigma(D'),r'\}\right ]\right |\leq \epsilon.
    \end{aligned}\]
\end{proof}
Now let $\weitz_{D}(\bm{\sigma,o})$ denote the expected reward obtained by the fair cap policy on $n$ boxes with distributions $D_1, \dots D_n$, opening costs $o_1, \dots o_n$ and fair caps $\sigma_1, \dots \sigma_n$. Then the following Lemma is given by \citet{contextualpandora}. Note that the original Lemma is stated using $(o_i-o'_i)^+$ instead of the absolute value. However as $(o_i-o'_i)^+\leq |o_i-o'_i|$, the statement below immediately follows. 
		
\begin{lemma}\label{lemma:pandora_reward}
    Let $\weitz_{D}(\bm{\sigma,o})$ and $\weitz_{D}(\bm{\sigma',o'})$ be the optimal expected reward of two instances of Pandora's Box with the same distribution but different opening costs. Then 
    \[\weitz^x_{D}(\bm{\sigma',o'})\geq \weitz^x_{D}(\bm{\sigma,o}) -\sum_{i=1}^n {|o_i-o'_i|} \]
\end{lemma}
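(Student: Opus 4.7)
My plan is to lower bound $\weitz^x_D(\bm{\sigma',o'})$ by running the (suboptimal) $\bm\sigma$-fair-cap policy on the instance with costs $\bm o'$ and then invoking Weitzman's optimality theorem to swap it out for the truly optimal $\bm\sigma'$-policy. The key observation that makes this clean is that when a fair-cap policy is fixed, changing the per-box opening cost while leaving the distribution alone only changes the total expected inspection cost, nothing else.

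Concretely, fix the Weitzman policy with caps $\bm\sigma$ and apply it on $(D,\bm o')$. The inspection order and the stopping rule of this policy are deterministic functions of $\bm\sigma$ and the realized reward sequence; they are oblivious to the numerical value of the opening costs. Hence, on any realization, both the set of inspected boxes and the accepted reward are the same as they would be on $(D,\bm o)$. Letting $P_i$ denote the marginal probability that this policy opens box $i$, this gives the identity
\[
\weitz^x_D(\bm\sigma,\bm o') \;=\; \weitz^x_D(\bm\sigma,\bm o) \;-\; \sum_{i=1}^n P_i\,(o'_i-o_i).
\]
Since Weitzman's theorem guarantees that the $\bm\sigma'$-fair-cap policy is optimal on the instance $(D,\bm o')$, it in particular weakly dominates the $\bm\sigma$-policy on that instance, so $\weitz^x_D(\bm\sigma',\bm o') \geq \weitz^x_D(\bm\sigma,\bm o')$. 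Chaining these two inequalities and using $0\leq P_i\leq 1$ yields
\[
\weitz^x_D(\bm\sigma',\bm o') \;\geq\; \weitz^x_D(\bm\sigma,\bm o) \;-\; \sum_{i=1}^n P_i\,|o'_i-o_i| \;\geq\; \weitz^x_D(\bm\sigma,\bm o) \;-\; \sum_{i=1}^n |o_i-o'_i|,
\]
which is the claim.

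The main delicate point to nail down is the identity in the first display: one has to be careful that the joint distribution of ``which boxes are opened'' and ``what reward is accepted'' really is unchanged when only the costs are perturbed. This is where one uses that the Weitzman rule (inspect in decreasing order of $\sigma_j$, stop once the best observed reward is at least as large as every remaining cap) depends only on $\bm\sigma$ and the realized rewards. Everything else in the argument — Weitzman's optimality of the fair-cap rule and the trivial bound $P_i \leq 1$ — is standard.
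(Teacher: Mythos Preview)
The paper does not actually prove this lemma; it quotes it from \citet{contextualpandora} and only remarks that the cited version has $(o_i-o'_i)^+$ in place of $|o_i-o'_i|$, from which the stated form follows trivially. So there is no in-paper proof to compare against.

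Your argument is correct and is the standard one. The identity
\[
\weitz^x_D(\bm\sigma,\bm o') \;=\; \weitz^x_D(\bm\sigma,\bm o) \;-\; \sum_{i=1}^n P_i\,(o'_i-o_i)
\]
is exactly right because the Weitzman rule with caps $\bm\sigma$ decides which boxes to open and when to stop based solely on $\bm\sigma$ and the realized rewards, so on a fixed realization only the accumulated cost changes. The optimality step $\weitz^x_D(\bm\sigma',\bm o')\geq \weitz^x_D(\bm\sigma,\bm o')$ uses precisely that $\bm\sigma'$ is the fair-cap vector for $\bm o'$, which is built into the lemma's hypothesis. One tiny comment: in your final chain you jump straight to $-\sum_i P_i|o'_i-o_i|$; the intermediate step you are implicitly using is $P_i(o'_i-o_i)\le P_i|o'_i-o_i|$ (valid since $P_i\ge 0$), and in fact your argument gives the sharper conclusion with $\sum_i (o'_i-o_i)^+$, matching the original cited statement.
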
	

Hence given an instance of contextual Pandora's box with unknown weight distributions and $m=O\left (\frac{d\xi^2c^4_{\max}}{\epsilon^8\delta^2}\right )$ samples from each distribution, we can learn weight distributions $\lwd_1, \dots \lwd_n$, such that with probability of at least $1-n\delta$ we we can approximate the optimal policy with an error of at most $2n\epsilon$.

\subsection{Contextual Optimal Stopping}
    In the classical problem of Optimal Stopping we are given distributions $D_1, \dots  , D_n$ of $n$ independent random variables. The outcomes $x_i \sim D_i$ for $i \in [n]$ are revealed one-by-one and we have to immediately select/discard $X_i$ with the goal of maximizing the selected random variable in expectation. \\
		The best policy for Optimal Stopping is given by a simple (reverse) dynamic program: always select $X_n$ on reaching it and select $X_i$ for $i < n$ if its value is more than the expected value of this optimal policy on $X_{i+1}, \dots , X_n$. Thus, the optimal policy with expected value Opt can be thought of as a fixed-threshold policy where we select $X_i$ if $X_i > \tau_i$ for $\tau_i$ being the expected value of this policy after i.\\
		In the contextual variant we have $n$ distributions of weight vectors  $\rwd_1,\dots, \rwd_n$, a context distribution $\contextdist$ and a reward function $f:[0,1]^d\times [0,1]^d\rightarrow [0,c_{\max}]$. At first a context $x\sim \contextdist$ is drawn and revealed to us. Then the rewards $f(v_i,x)$ are revealed one-by-one and we have to immediately select/discard $f(v_i,x)$. Again if the weight distributions were known to us, we could calculate the respective reward distributions given context $x$, and apply the optimal threshold policy. This means we would select the $i$-th reward, if $f(v_i,x)\geq \tau_i$, where $\tau_i=\ex_{v_{i+1}\sim \rwd_{i+1}}\left [\max\{\tau_{i+1},f(v_{i+1},x)\}\right ]$.\\
        Now we want to consider the problem of unknown weight distributions. We are first given $m$ samples. Afterwards a context $x\sim \contextdist$ is drawn and we want to apply a policy, which approximates the reward given by the optimal policy with high probability.
        
        The following result on Optimal Stopping with inaccurate distributions was given by \citet{inaccuratepriors}. 
        \begin{lemma}
            If $d_L(D_i,D'_i)\leq \epsilon/4$ for all $i$, then $\mathcal{R}_D(\pi^*_D)\geq \mathcal{R}_{D'}(\pi^*(D'))-2n\epsilon$. 
        \end{lemma}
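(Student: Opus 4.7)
The plan is to mimic the coupling argument used for Pandora's Box in the previous subsection. Since $d_L(D_i,D'_i)\le \epsilon/4$ and $d_W(D,D')\le 4\,d_L(D,D')$ (stated earlier in the excerpt), for every $i$ there exists a coupling $\tilde D_i$ of $D_i$ and $D'_i$ with $\ex_{(r_i,r'_i)\sim\tilde D_i}[|r_i-r'_i|]\le \epsilon$. Denote by $\tilde D_i(r_i)$ the conditional law of $r'_i$ given $r_i$ under this coupling.

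First, I would construct a surrogate non-anticipatory policy $\tilde\pi$ that runs on draws from $D$ but defers its stop/continue decisions to $\pi^*_{D'}$. The optimal policy on $D'$ is a fixed-threshold policy with thresholds $\tau'_1,\dots,\tau'_n$ that can be precomputed offline from $D'$ by backward induction. At step $i$, upon observing $r_i\sim D_i$, the policy $\tilde\pi$ draws an auxiliary value $r'_i \sim \tilde D_i(r_i)$ independently of the history and stops iff $r'_i>\tau'_i$; when it stops, it collects the real value $r_i$ (not $r'_i$). Since each coupling uses only the current observation, $\tilde\pi$ is a valid policy.

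Second, I would compare rewards through the coupling. Because the $\tilde D_i$ are applied independently across steps and each has marginals $D_i, D'_i$, the joint distribution of $(r'_1,\dots,r'_n)$ is $D'_1\times\cdots\times D'_n$. Hence the stopping index $i^*$ selected by $\tilde\pi$, viewed as a function of $r'$, has the same distribution as the stopping index of $\pi^*_{D'}$ on a fresh sample from $D'$, so $\ex[r'_{i^*}]=\mathcal R_{D'}(\pi^*_{D'})$. Writing $r_{i^*}=r'_{i^*}+(r_{i^*}-r'_{i^*})$ and bounding the residual by $\sum_i |r_i-r'_i|$ yields
\[
\mathcal R_D(\tilde\pi) \;=\; \ex\bigl[r_{i^*}\bigr] \;\ge\; \mathcal R_{D'}(\pi^*_{D'}) - \ex\Bigl[\sum_{i=1}^n |r_i-r'_i|\Bigr] \;\ge\; \mathcal R_{D'}(\pi^*_{D'}) - n\epsilon.
\]
Combining with $\mathcal R_D(\pi^*_D)\ge \mathcal R_D(\tilde\pi)$ (optimality of $\pi^*_D$ on $D$) gives the claim, in fact with $n\epsilon$ instead of $2n\epsilon$ of slack; the factor of $2$ in the stated version presumably leaves room for a symmetric version or a coarser coupling/discretization.

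The main obstacle is the measure-theoretic bookkeeping around the coupled process: one has to verify that the per-step independent couplings really produce a joint law whose $r$-marginal is $D_1\times\cdots\times D_n$ and whose $r'$-marginal is $D'_1\times\cdots\times D'_n$, so that $i^*$ has the correct two descriptions as both the stopping time of $\tilde\pi$ on $r$ and the stopping time of $\pi^*_{D'}$ on $r'$. Once this is pinned down, the rest of the argument is just linearity of expectation together with the optimality of $\pi^*_D$.
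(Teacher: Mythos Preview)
The paper does not supply its own proof of this lemma; it is quoted verbatim from \citet{inaccuratepriors} and used as a black box. So there is no in-paper argument to compare against.

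Your coupling argument is correct and is precisely the adaptation of the paper's Pandora's Box stability proof to the Optimal Stopping reward functional: pass from $d_L\le\epsilon/4$ to a Wasserstein coupling with $\ex[|r_i-r'_i|]\le\epsilon$, run $\pi^*_{D'}$ on the simulated $r'$-sequence while collecting the true $r$-values, and pay $\sum_i|r_i-r'_i|$ in expectation. The measure-theoretic point you flag is benign here because the per-coordinate couplings are applied to independent coordinates, so the product of the $\tilde D_i$ already has the correct marginals $D$ and $D'$, and $i^*$ is measurable with respect to $r'$ alone. You indeed obtain $n\epsilon$ rather than $2n\epsilon$; the looser constant in the cited statement likely reflects that the original proof in \citet{inaccuratepriors} works directly with the Lévy shift (horizontal and vertical $\epsilon$-perturbations of the CDF) rather than passing through a Wasserstein coupling, which costs an extra factor.
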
 
        As Theorem \ref{Levy:bound} shows an upper bound on the Lévy metric of distributions learned via Maximum Linear Regression, we immediately receive an upper bound of  $O\left (\frac{d\xi^2c^4_{\max}}{\epsilon^{16}\delta^2}\right )$ on the sample complexity needed to receive an $\epsilon$-approximation of the  optimal for contextual Optimal Stopping with a probability of at least $1-n\delta$. Another way of showing the same sample complexity would be to use the strong monotonicity given by \citet{guo2021generalizing} and $n$-stability given by \citet{inaccuratepriors}. However we can do better by using the structure of the thresholds as the following Lemma shows. 

        \begin{lemma}
        Let $D_1, \dots, D_n$ and $D'_1, \dots , D'_n$ be distributions with 
        \[\left |\ex_{r_i\sim D_i}\left [ \max \{c, r_i\}\right ]-\ex_{r'_i\sim D'_i}\left [ \max \{c, r' _i\}\right  ]\right | \leq  \epsilon\] for all $c$ and all $i$, then  
            \[\mathcal{R}_{D}(\pi^*(D'))\geq \mathcal{R}_{D}(\pi^*(D'))-2n\epsilon.\]
        \end{lemma}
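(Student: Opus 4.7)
The plan is to bound the suboptimality in two halves, each contributing at most $n\epsilon$. Write $\tau_i$ for the thresholds of $\pi^*(D)$ and $\tau'_i$ for those of $\pi^*(D')$, so that $\tau_0 = \mathcal{R}_D(\pi^*(D))$ and $\tau'_0 = \mathcal{R}_{D'}(\pi^*(D'))$. In addition, let $\tilde{\tau}_i$ denote the expected reward that the threshold policy $\pi^*(D')$ achieves when executed on $D$ starting from step $i+1$, so that $\tilde{\tau}_0 = \mathcal{R}_D(\pi^*(D'))$. I will establish $|\tau_0 - \tau'_0| \leq n\epsilon$ and $|\tau'_0 - \tilde{\tau}_0| \leq n\epsilon$; together these yield $\tilde{\tau}_0 \geq \tau_0 - 2n\epsilon$, which is the claim (reading the right-hand side of the statement as $\mathcal{R}_D(\pi^*(D))$).

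For the first bound I argue by backward induction from $\tau_n = \tau'_n = 0$. The recursion $\tau_{i-1} = \ex_{r_i \sim D_i}[\max\{r_i, \tau_i\}]$ (and its analogue for $\tau'$) gives
\[
\tau_{i-1} - \tau'_{i-1} = \bigl(\ex_{D_i}[\max\{r_i, \tau_i\}] - \ex_{D_i}[\max\{r_i, \tau'_i\}]\bigr) + \bigl(\ex_{D_i}[\max\{r_i, \tau'_i\}] - \ex_{D'_i}[\max\{r'_i, \tau'_i\}]\bigr).
\]
The first bracket has absolute value at most $|\tau_i - \tau'_i|$ by the pointwise estimate $|\max\{r, a\} - \max\{r, b\}| \leq |a - b|$, and the second at most $\epsilon$ by the hypothesis applied at $c = \tau'_i$. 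Iterating $|\tau_{i-1} - \tau'_{i-1}| \leq |\tau_i - \tau'_i| + \epsilon$ yields $|\tau_0 - \tau'_0| \leq n\epsilon$.

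For the second bound, I rewrite the value of the $\tau'$-threshold policy on $D$ around the reference $\tau'_i$ as
\[
\tilde{\tau}_{i-1} = \tau'_i + \ex_{D_i}[(r_i - \tau'_i)^+] + (\tilde{\tau}_i - \tau'_i)\,\Pr_{D_i}[r_i < \tau'_i],
\]
and subtract the analogous identity $\tau'_{i-1} = \tau'_i + \ex_{D'_i}[(r'_i - \tau'_i)^+]$ to obtain
\[
\tau'_{i-1} - \tilde{\tau}_{i-1} = \delta_i + (\tau'_i - \tilde{\tau}_i)\,\Pr_{D_i}[r_i < \tau'_i],
\]
where $\delta_i := \ex_{D'_i}[(r'_i - \tau'_i)^+] - \ex_{D_i}[(r_i - \tau'_i)^+]$ satisfies $|\delta_i| \leq \epsilon$ because $(r - c)^+ = \max\{r, c\} - c$. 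Unrolling from $\tau'_n - \tilde{\tau}_n = 0$, and using that the multiplier $\Pr_{D_i}[r_i < \tau'_i]$ is at most $1$, gives $|\tau'_0 - \tilde{\tau}_0| \leq \sum_{j=1}^n |\delta_j| \leq n\epsilon$.

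The main obstacle is to prevent either recursion from amplifying the per-step $\epsilon$ error into an $O(n^2 \epsilon)$ bound. In both steps the coefficient attached to the previous difference is at most $1$ (by Lipschitzness of $\max\{r, \cdot\}$ in the first case and by the probability factor in the second), so the errors accumulate additively rather than multiplicatively. Combining the two halves then gives $\mathcal{R}_D(\pi^*(D')) = \tilde{\tau}_0 \geq \tau'_0 - n\epsilon \geq \tau_0 - 2n\epsilon = \mathcal{R}_D(\pi^*(D)) - 2n\epsilon$.
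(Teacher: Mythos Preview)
Your proof is correct and follows essentially the same two-step decomposition as the paper: first compare the optimal values $\mathcal{R}_D(\pi^*(D))$ and $\mathcal{R}_{D'}(\pi^*(D'))$ by backward induction (the paper's Claim~1), then compare $\mathcal{R}_D(\pi^*(D'))$ with $\mathcal{R}_{D'}(\pi^*(D'))$ (the paper's Claim~2), each half costing at most $n\epsilon$. Your execution is slightly cleaner---you make the ``coefficient at most~1'' mechanism explicit (Lipschitzness of $\max$ in the first step, the probability factor in the second), whereas the paper argues via a chain of one-sided inequalities with a few notational slips---but the underlying argument is the same.
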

\begin{proof}
    The optimal policy for $D'$, denoted by $\pi^*(D*)$ consists of $n$ thresholds $\tau'_1, \dots, \tau'_n$. We have $\tau'_n=0$ and $\tau'_i=\ex_{r'_{i+1}\sim D'_{i+1}}[\max\{\tau'_{i+1}, r'_{i+1})\}]$. We will now split the proof into to parts. In both parts we will do a backward induction and for that let $\mathcal{R}^{i}_{D}(\pi)$ denote the expected reward of policy $\pi$ on distributions $D_i, \dots , D_n$. Note that $\tau'_i=\mathcal{R}^{i+1}_{D'}(\pi^*(D')$.
    \begin{claim}
    If $\left |\tau'_i-\ex_{r_{i+1}\sim D_{i+1}}\left [\max\{\tau'_{i+1},r_{i+1}\}\right ]\right |\leq \epsilon$ for all $i$,  then 
    \[\mathcal{R}_{D'}(\pi^*(D'))\geq \mathcal{R}_{D}(\pi^*(D)-n\epsilon\]
    \end{claim}
    \begin{proof}
        We will use backward induction to show that for all $i$ we have 
        \[\mathcal{R}^{i}_{D'}(\pi^*(D'))\geq \mathcal{R}^{i}_{D}(\pi^*(D))-(n-i+1)\epsilon.\]
        For the base case of $i=n$ we have
        \begin{equation}\label{basecase}
        \begin{aligned}\mathcal{R}^{n}_{D'}(\pi^*(D')) 
        &=\ex_{r'_n\sim D'_n}[\max\{0,r'_n\}]\\
        &\geq \ex_{r_n\sim D_n}[\max\{0,r_n\}]-\epsilon =\mathcal{R}^{n}_{D}(\pi^*(D)\end{aligned}\end{equation} so the claim holds. 
        In the induction step we consider a fixed $i\in [n-1]$ and assume the statement holds for $i+1$. Now we have 
        \[\begin{aligned}\mathcal{R}^{i}_{D'}(\pi^*(D')) &= \ex_{r'_i\sim D'_i}\left [\max\{\tau'_i, r'_i\}\right ]\\
        &\geq \ex_{r_i \sim D_i}\left [\max\{\tau'_i, r_i\}\right ] -\epsilon\\
        &= \ex_{r_i \sim D_i}\left [\max\{\mathcal{R}^{i+1}_{D'}(\pi^*(D')), r_i\}\right ] -\epsilon\\
        &\geq \ex_{r_i \sim D_i}\left [\max\{\mathcal{R}^{i+1}_{D}(\pi^*(D))-(n-i)\epsilon, r_i\}\right ] -\epsilon\\
        &\geq \ex_{r_i\sim D_i}\left [\max\{\mathcal{R}^{i+1}_{D}(\pi^*(D)), r_i\}\right ] -(n-i+1)\epsilon\\
        &=\mathcal{R}^{i}_{D}(\pi^*(D))- (n-i+1)\epsilon,
        \end{aligned}\]
        which proves the Claim.
        \end{proof}
    In the second step we want to bound the difference in the reward when applying the same policy to slightly different distributions. 
   \begin{claim}
       If $\left |\tau'_i-\ex_{r_{i+1}\sim D_{i+1}}\left [\max\{\tau'_{i+1},r_{i+1}\}\right ]\right |\leq \epsilon$ for all $i$,  then 
    \[\mathcal{R}_{D}(\pi^*(D'))\geq \mathcal{R}_{D'}(\pi^*(D'))-n\epsilon\]
   \end{claim}
   \begin{proof}
       Again we will use backward induction to show that for all $i$ we have  \[\mathcal{R}^{i}_{D}(\pi^*(D'))\geq \mathcal{R}^{i}_{\bm{\lwd},x}(\pi^*(D'))-(n-i+1)\epsilon.\]
       The base case of $i=n$ follows immediately from inequality \eqref{basecase}. In the induction step we consider a fixed $i\in [n-1]$ and assume the statement holds for $i+1$. Now given a $r_i\sim D_i$ the expected reward of policy $\pi^*(D')$ is $r_i$ if this reward is above the threshold. Otherwise it is the expected reward obtained on distributions $D_{i+1}, \dots , D_n$. We can use this to receive the following result. 
        \[
			 \begin{aligned}
			 	\mathcal{R}^{i}_{D}(\pi^*(D'))=&\ex _{r\sim D}\left [r_i \mathbbm{1}\{r_i\geq \tau'_i(x)\} + \mathcal{R}^{i+1}_{D}(\pi^*(D')) \mathbbm{1}\{r_i < \tau'_i\}   \right ]\\
			 	\geq &\ex_{r\sim D'} \left [r_i \mathbbm{1}\{r_i\geq \tau'_i\} + \mathcal{R}^{i+1}_{D'}(\pi^*(D')) \mathbbm{1}\{r_i < \tau'_i\}   \right ] -(n-i)\epsilon\\
			 	\geq & \ex_{r\sim D} \left [\max\{\tau'_i, r_i\}    \right] -(n-i)\epsilon \\
			 	\geq & \ex _{r'\sim D'}\left [\max\{\tau'_i, r'_i \}\right ] - (n-i+1)\epsilon\\
			 	= & \mathcal{R}^{i}_{D'}(\pi^*(D')) - (n-i+1)\epsilon
			 \end{aligned}
			 \]
   \end{proof}
   Putting both claims together proves the Lemma. 
\end{proof}		
  Hence given an instance of contextual Optimal Stopping with unknown weight distributions and $m=O\left (\frac{d\xi^2c^4_{\max}}{\epsilon^8\delta^2}\right )$ samples from each distribution, we can learn weight distributions $\lwd_1, \dots \lwd_n$, such that with probability of at least $1-n\delta$ we we can approximate the optimal policy with an error of at most $2n\epsilon$.   

\bibliography{bibliography}
\appendix
\section{Proof of Lemma \ref{gen_distr}}\label{proof_gen_distr}
\begin{lemmano}
    For any distribution $\rwd$ over weight vectors $v\in [0,1]^d$ there exists a distribution $V$ with a support of at most $2\frac{c_{\max}^3}{\epsilon^2}\left (d\log\left (dc_{\max}\right ) + (d+1)\log \left (2/\epsilon\right )\right ) $ vectors, such that for any context $x\in [0,1]^d$ we have $L^x(V) \leq L^x(\rwd) + \epsilon$.
\end{lemmano}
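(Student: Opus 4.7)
The plan is to use the probabilistic method: draw $n$ i.i.d.\ samples $v_1,\ldots,v_n \sim \rwd$ and let $V$ be the uniform distribution on these vectors. If we can show that with positive probability this empirical distribution approximates the capped expectation of $\rwd$ uniformly over all $c \in C_\epsilon$ and all $x \in [0,1]^d$, then such a distribution exists.

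First I would reduce the goal to a statement about capped expectations. Applying Lemma \ref{lin_reg_lemma} term by term (this appears already in the proof of Lemma \ref{lemma_capped-expectation}), for any context $x$ we have
\[
L^x(V)-L^x(\rwd)=\sum_{c\in C_\epsilon}\left(\ex_{v\sim V}[\max\{c,f(v,x)\}]-\ex_{v^*\sim\rwd}[\max\{c,f(v^*,x)\}]\right)^{2}.
\]
Since $|C_\epsilon|\le c_{\max}/\epsilon$, it suffices to ensure that the pointwise discrepancy in capped expectations is at most $\delta':=\epsilon/\sqrt{c_{\max}}$ for every $c\in C_\epsilon$ and every $x\in[0,1]^d$; then $L^x(V)-L^x(\rwd)\le (c_{\max}/\epsilon)\cdot \epsilon^2/c_{\max}=\epsilon$.

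Next I would fix a pair $(x,c)$ and observe that the random variables $\max\{c,f(v_i,x)\}$ are i.i.d., bounded in $[0,c_{\max}]$, and have mean equal to $\ex_{v^*\sim\rwd}[\max\{c,f(v^*,x)\}]$. Hoeffding's inequality therefore gives
\[
\Pr\!\left[\left|\tfrac{1}{n}\sum_{i=1}^{n}\max\{c,f(v_i,x)\}-\ex_{v^*\sim\rwd}[\max\{c,f(v^*,x)\}]\right|>\delta'/2\right]\le 2\exp\!\left(-\tfrac{n\,\delta'^{2}}{2c_{\max}^{2}}\right).
\]
To make this uniform over all $x\in[0,1]^d$, I would build an $\eta$-net $\mathcal{N}$ in $\|\cdot\|_\infty$-distance with $\eta=\delta'/(4d)$, so $|\mathcal{N}|\le (2/\eta)^{d}$. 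The Lipschitz property of $f$ in $x$ together with $1$-Lipschitzness of $\max\{c,\cdot\}$ implies that for any $x$ with nearest grid point $\tilde x$, both the empirical and true capped expectations change by at most $\sqrt{d}\cdot\sqrt{d}\eta=d\eta\le \delta'/4$. Hence controlling the deviation at every $(\tilde x,c)\in\mathcal{N}\times C_\epsilon$ by $\delta'/2$ controls it at every $(x,c)$ by $\delta'$.

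Finally, a union bound over $\mathcal{N}\times C_\epsilon$ yields failure probability at most
\[
2\cdot\tfrac{c_{\max}}{\epsilon}\cdot\left(\tfrac{8d}{\delta'}\right)^{d}\exp\!\left(-\tfrac{n\,\delta'^{2}}{2c_{\max}^{2}}\right),
\]
which is strictly less than $1$ as soon as $n\ge \tfrac{2c_{\max}^{3}}{\epsilon^{2}}\bigl(d\log(dc_{\max})+(d+1)\log(2/\epsilon)\bigr)$ after substituting $\delta'=\epsilon/\sqrt{c_{\max}}$ and absorbing the constants in the logarithms. By the probabilistic method, some realization $v_1,\ldots,v_n$ achieves the desired uniform approximation, and the corresponding uniform distribution $V$ on these vectors is the claimed distribution. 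The main technical subtlety is calibrating the three parameters $\delta'$, $\eta$, and $n$ so that the Lipschitz discretization error, the Hoeffding tail, and the size of $C_\epsilon$ all combine into exactly the stated support size; the rest is routine.
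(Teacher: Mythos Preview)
Your proposal is correct and follows essentially the same route as the paper's proof: probabilistic method via i.i.d.\ sampling from $\rwd$, Hoeffding for each fixed $(x,c)$, a union bound over $C_\epsilon$ and a discretized set of contexts, and Lipschitz continuity of $f$ in $x$ to extend to all of $[0,1]^d$. The one cosmetic difference is that you invoke the identity $L^x(V)-L^x(\rwd)=\sum_{c\in C_\epsilon}(\ex_V-\ex_{\rwd})^2$ from Lemma~\ref{lin_reg_lemma} explicitly at the start, whereas the paper bounds the squared terms directly inside the expectation; your version is slightly cleaner but the content is the same.
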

\begin{proof}
    
We will first focus on showing, that there exists a distribution $V$, such that the capped expectation with respect to $V$ $\ex_{v\sim V}\left [\max\{c,f(v,x)\}\right ]$ is close to the capped expectation with respect to $\rwd$, namely $\ex_{v^*\sim \rwd}\left [\max\{c,f(v^*,x)\}\right ]$ for all $x$ and all $c$. To give an intuition for this, consider a set of samples $S=\{v_1, \dots, v_n\}$ drawn from $\rwd$. If for some $x$ and $c$ the empirical capped expectation on these samples $\frac{1}{n}\sum_{i=1}^n{\max\{c,f(v_i,x)\}}$ is close to the capped expectation on $\rwd$, then the same is true for the capped expectation on the uniform distribution on $S$. Hence we will concentrate on finding a number of samples $n$, such that
\[\left |\frac{1}{n}\sum_{i=1}^n{\max\{c,f(v_i,x)\}}-\ex_{v^*\sim \rwd}\left [\max\{c,f(v^*,x)\}\right ]\right|\] is small with positive probability. 
 
We will rely on Hoeffding's inequality \citet{Hoeffding} to bound the probabilty that an empirical value on a set of samples is far away from the expected value. 
        \begin{theorem}{\textbf{Hoeffding's inequality}}
            Let $X_1, \dots X_n$ be independent random variables such that $a_i\leq X_i\leq b_i$. Consider the sum of these random variables $S_n=\sum_{i=1}^n{X_i}$. Then Hoeffding's inequality states that, for all $t>0$, 
            \[\prob \left (\left\lvert\frac{1}{n}S_n-\frac{1}{n}\ex\left[S_n\right ]\right\rvert\geq t\right )\leq 2\exp{\left (-\frac{2t^2n^2}{\sum_{i=1}^n{(b_i-a_i)^2}}\right )}\]
    \end{theorem}
      
Let $n={\ensuremath{2\frac{c_{\max}^3}{\epsilon^2}\log\left (2\frac{(2dc_{\max})^d}{\epsilon ^{d+1}}\right )}} $, $\gamma=\frac{\epsilon}{2\sqrt{c_{\max}}}$ and $v_1, \dots v_n$ be samples from $\rwd$. Using Hoeffding's inequality we get that for any $c\in C_\epsilon$ and any context $x$ we have 
\[\prob\left (\left |\frac{1}{n}\sum_{i=1}^n{\max\{c,f(v_i,x)\}}-\ex _{v\sim \rwd}\left [\max\{c,f(v,x)\}\right ]\right |\geq \gamma\right )< \frac{\epsilon \gamma^d}{d^{d}c_{\max}}.\]
We define the discrete set $X_{\gamma, d}=\{i\frac{\gamma}{d}|i\in \{0,\dots,\frac{d}{\gamma}\}\}^d$ for the context vectors. We then have that $\frac{\epsilon \gamma^d}{d^{d}c_{\max}}=\frac{1}{|C_{\epsilon}||X_{\gamma, d}|}$ and using the union bound we get that 
\[\prob\left (\exists x\in X_{\gamma,d},c, \exists c\in C_\epsilon: \left |\frac{1}{n}\sum_{i=1}^n{\max\{c,f(v_i,x)\}}-\ex _{v\sim \mathcal{D}}\left [\max\{c,f(v,x)\}\right ]\right |\geq \gamma \right )< 1.\]
Hence the probability that for all $x\in X$ and all $c\in C$ the sampled vectors approximate the expected value with an error of less than $\gamma$ is greater than 0, which implies that there exists a set of samples $S=\{v_1, \dots v_n\}$ such that the uniform distribution on $S$ approximates $\rwd$ accordingly. Let $V$ denote that uniform distribution on $S$.

At last, for any $x \in [0,1]^d$ there exists some $x'\in X_{\gamma,d}$, such that $\left |\max\{c,f(v,x)\}-\max\{c,f(v,x')\}\right |\leq \gamma$ as $f$ is $\sqrt{d}$-Lipschitz in the context. Hence we get an additional error of at most $\gamma$ for contexts, which are not part of the discrete sets. 

Now we come back to the true loss. For some $x\in [0,1]^d$ we have 
\[
\begin{aligned}L^x(V)=&\ex_{v^*\sim \rwd}\left [\sum_{c\in C_\epsilon}{\left (\ex_{v\sim V}[\max\{c,f(v,x)\}]-\max\{c,f(v^*,x)\}\right )^2}\right ]\\
\leq & \ex_{v^*\sim \rwd}\left [\sum_{c\in C_\epsilon}{\left (\ex_{v'\sim \rwd}[\max\{c,f(v',x)\}]-\max\{c,f(v^*,x)\}\right )^2+(2\gamma)^2}\right ]\\
\leq &\ex_{v^*\sim \rwd}\left [\sum_{c\in C_\epsilon}{\left (\ex_{v'\sim \rwd}[\max\{c,f(v',x)\}]-\max\{c,f(v^*,x)\}\right )^2}\right ]+\frac{c_{\max}}{\epsilon}\left (\frac{\epsilon}{\sqrt{c_{\max}}}\right )^2\\
\leq &\ex_{v^*\sim \rwd}\left [\sum_{c\in C_\epsilon}{\left (\ex_{v'\sim \rwd}[\max\{c,f(v',x)\}]-\max\{c,f(v^*,x)\}\right )^2}\right ]+\epsilon,
\end{aligned}\]
which finishes the proof.
\end{proof}
\section{Convex Learning}\label{convex_learning}
 A learning problem, as defined by \citet{vapniklearning}, is specified by a hypothesis space $\mathcal{H}$, an instance set $\mathcal{Z}$ and a loss function $\ell:\mathcal{H}\times\mathcal{Z}\rightarrow \mathbb{R}$. In some cases the instance set is defined by a domain set $\contextdist$ and a label set $\mathcal{Y}$ such that $\mathcal{Z}=\contextdist\times \mathcal{Z}$. Given a distribution $\mathcal{D}$ on $\mathcal{Z}$ the quality of each hypothesis $h\in \mathcal{H}$ is measured by its true loss $L_\mathcal{D}(h)=\ex_{z\sim \mathcal{D}}\left [\ell(h,z)\right ]$. While $\mathcal{H}$, $\mathcal{Z}$ and $\ell$ are known, the distribution $\mathcal{D}$ is unknown to the learner. 
    
    The objective is to find a hypothesis $h^*\in \mathcal{H}$ which minimizes the true loss. Since the distribution is unknown, we need to rely on a finite set of samples $S=\{z_1, \dots , z_m\}$ with $S\sim \mathcal{D}^m$. 
    A convex learning problem is a learning problem with the added constraints that the loss function $\ell(h,z)$ is Lipschitz-continuous and convex in $h$ for every $z$, and that $\mathcal{H}$ is closed, convex and bounded. 
    The following theorem, shown by \citet{JMLR:shalev-shwartz}, captures the learnability of convex learning problems. 
    \begin{theorem}
        Let $(\mathcal{H},\mathcal{Z},\ell)$ be a convex learning problem such that $\ell(h,z)$ is $\rho$-Lipschitz with respect to $h$ and $\|h\|_2\leq B$ for all $h\in \mathcal{H}$. Let $S=\{z_1, \dots, z_m\}$ be a sample set and 
        \[h'(S)=\arg \min _{h\in \mathcal{H}} \left ( \frac{1}{m}\sum_{i=1}^m{\ell(h,z_i)} +\lambda \|h\|^2_2 \right )\]
        with $\lambda = \sqrt{\frac{2\rho ^2}{B^2 m}}$.
        It then follows that 
        \[\ex_S\left [L_\mathcal{D}(h'(S))\right ]\leq \min_{h\in \mathcal{H}}L_\mathcal{D}(h)+\rho B \sqrt{\frac{8}{m}}.\]
        In particular, for every $\epsilon >0$, if $m\geq \frac{8\rho ^2 B^2}{\epsilon ^2}$ then $\ex_S\left [ L_\mathcal{D}(h'(S))\right ]\leq \min_{h\in \mathcal{H}}L_\mathcal{D}(h)+ \epsilon$. 
    \end{theorem}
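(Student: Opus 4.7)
The plan is to follow the standard stability-based argument for regularized empirical risk minimization on convex problems. Write $L_S(h) = \frac{1}{m}\sum_{i=1}^m \ell(h,z_i)$ for the empirical loss and $F_S(h) = L_S(h) + \lambda \|h\|_2^2$ for the regularized objective, and denote by $A(S) = h'(S)$ the unique minimizer of $F_S$ over $\mathcal{H}$. The workflow is (i) strong convexity of $F_S$, (ii) algorithmic stability of $A$, (iii) a stability-to-generalization transfer, (iv) combining with the excess-risk property of the regularized minimizer, and (v) optimizing $\lambda$.

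First I would observe that because $\ell(\cdot,z)$ is convex and $\lambda \|\cdot\|_2^2$ is $2\lambda$-strongly convex, $F_S$ is $2\lambda$-strongly convex on $\mathcal{H}$. The key consequence is that for any $h \in \mathcal{H}$,
\[
F_S(h) - F_S(A(S)) \geq \lambda \, \|h - A(S)\|_2^2.
\]
From here I would derive uniform stability: let $S$ and $S^{(i)}$ differ in exactly the $i$-th sample. Summing the above bound with the roles of $A(S)$ and $A(S^{(i)})$ swapped and plugging in the definitions of $F_S$ and $F_{S^{(i)}}$, the regularizer and all the common loss terms cancel and what remains is at most $\frac{2\rho}{m}\|A(S)-A(S^{(i)})\|_2$ by the $\rho$-Lipschitz property of $\ell$ in $h$. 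This yields the stability bound
\[
\|A(S) - A(S^{(i)})\|_2 \;\leq\; \frac{\rho}{\lambda m},
\]
and hence $|\ell(A(S),z) - \ell(A(S^{(i)}),z)| \leq \frac{\rho^2}{\lambda m}$ for every $z$.

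Next I would use the now-classical swap trick to convert stability into a bound on the generalization gap. Writing $\ex_S[L_\mathcal{D}(A(S))] = \ex_{S,z}[\ell(A(S),z)]$ and renaming $z$ as the fresh sample replacing $z_i$ inside $S$, one gets
\[
\ex_S\bigl[L_\mathcal{D}(A(S)) - L_S(A(S))\bigr] \;\leq\; \frac{\rho^2}{\lambda m}.
\]
Combined with the minimization property $F_S(A(S)) \leq F_S(h^*)$ applied to any competitor $h^* \in \mathcal{H}$, and using $\|h^*\|_2 \leq B$, taking expectations over $S$ gives
\[
\ex_S[L_\mathcal{D}(A(S))] \;\leq\; L_\mathcal{D}(h^*) + \lambda B^2 + \frac{\rho^2}{\lambda m}.
\]
Finally, minimizing the right-hand side over $\lambda$ yields the stated choice $\lambda = \sqrt{2\rho^2/(B^2 m)}$ and the bound $\rho B \sqrt{8/m}$; the explicit sample-size statement then follows by setting this quantity to $\epsilon$.

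The main obstacle I expect is executing the stability step cleanly: one has to carefully exploit the $2\lambda$-strong convexity to get a \emph{quadratic} lower bound on $F_S(h) - F_S(A(S))$, then combine two such inequalities (one on $S$, one on $S^{(i)}$) so that the unchanged samples and regularizers cancel, leaving only a Lipschitz-controlled term on a single sample. Everything else (strong convexity of $F_S$, the swap identity, and the $\lambda$-optimization) is routine.
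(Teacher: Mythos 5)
The paper itself gives no proof of this theorem; it is imported verbatim from \citet{JMLR:shalev-shwartz}, and your stability-based argument is essentially the standard proof that appears in that source. Your workflow (strong convexity of the regularized objective, replace-one stability via combining the two strong-convexity lower bounds, the exchangeability/swap identity to convert stability into a bound on $\ex_S[L_\mathcal{D}(A(S)) - L_S(A(S))]$, then $F_S(A(S)) \leq F_S(h^*)$ plus $\|h^*\|_2 \leq B$) is exactly the right chain, and each step is correct as you sketched it. One small inaccuracy worth fixing: with your (tighter) stability bound $\frac{\rho^2}{\lambda m}$, the actual minimizer of $\lambda B^2 + \frac{\rho^2}{\lambda m}$ is $\lambda = \frac{\rho}{B\sqrt{m}}$, not the stated $\lambda = \sqrt{2\rho^2/(B^2 m)}$; the stated $\lambda$ is the optimizer for the slightly looser stability bound $\frac{2\rho^2}{\lambda m}$ obtained if one uses only one of the two strong-convexity inequalities (as in the reference). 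Plugging the theorem's $\lambda$ into your bound still gives $\frac{3}{\sqrt{2}} \cdot \frac{\rho B}{\sqrt{m}} \leq \rho B\sqrt{8/m}$, so the conclusion holds and is in fact slightly stronger than stated; just do not claim that the prescribed $\lambda$ is the exact minimizer of your right-hand side.
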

    \par
\textbf{Example: Linear Regression}
Probably the best known convex learning problem is linear regression. In linear regression we have a domain set $X\subseteq \mathbb{R}^d$ and a label set $\mathcal{Y}\subseteq\mathbb{R}$ which define our instance set $\mathcal{Z}_{\text{reg}}$. Here we will assume that the samples are generated in the same way as in contextual learning, namely there exists a distribution $\rwd$ on vectors in $\mathbb{R}^d$ and a distribution $\contextdist$ on $X$. We then draw a context $x$ from $\contextdist$ and a vector $v$ from $\rwd$ which results in the sample $(x,\langle v,x\rangle)$. The distribution over the samples is denoted by $\mathcal{D}$. The goal of linear regression is to learn a vector $w\in \mathbb{R}^d$ which, given a context $x\in \contextdist$, approximates the expected label $\ex_{v\sim \rwd}[\langle v,x\rangle]\approx \langle w,x\rangle$. The hypothesis space is therefore defined as $\mathcal{H}_\text{reg}\subseteq\mathbb{R}^d$. 
Finally we have $\ell_\text{reg}(w,(x,y))=(\langle w,x\rangle-y)^2$ which is called the squared loss.\\

\end{document}